\newtheorem{theorem}{Theorem}[section]
\newtheorem{lemma}[theorem]{Lemma}
\newtheorem{definition}[theorem]{Definition}
\newcommand{\nc}[1]{\newcommand{#1}}
\nc{\bs}[1]{\boldsymbol{#1}}
\nc{\nnp}{NN}
\nc{\ms}{\mathcal{C}}
\nc{\df}{\delta}
\nc{\emp}[1]{\emph{#1}}
\nc{\rplus}{\mathbb{R}_+}
\nc{\arc}{x}
\nc{\be}{\begin{equation*}}
\nc{\ee}{\end{equation*}}
\nc{\ntr}{T}
\nc{\xt}[1]{x_{#1}}
\nc{\lvt}[1]{\bs{\ell}_{#1}}
\nc{\nma}{K}
\nc{\act}[1]{a_{#1}}
\nc{\lct}[2]{\ell_{#1,#2}}
\nc{\lab}{y}
\nc{\reg}[1]{R(#1)}
\nc{\expt}[1]{\mathbb{E}[#1]}
\nc{\apr}{c}
\nc{\ars}{\mathcal{S}}
\nc{\tpg}{\gamma}
\nc{\dtb}{\gamma}
\nc{\pkn}{\Psi}
\nc{\cns}{\mathcal{X}}
\nc{\mrg}{\mathcal{M}}
\nc{\pra}{\rho}
\nc{\tmo}{\tilde{\mathcal{O}}}
\nc{\asp}{\Lambda}
\nc{\nat}{\mathbb{N}}
\nc{\tre}{\mathcal{Z}}
\nc{\arv}{v}
\nc{\vco}[1]{c(#1)}
\nc{\vde}[1]{d(#1)}
\nc{\rot}{r}
\nc{\arl}{u}
\nc{\lea}{\mathcal{L}}
\nc{\evv}[1]{\bs{\lambda}(#1)}
\nc{\evc}[2]{\lambda_{#1}(#2)}
\nc{\bone}{\bs{1}}
\nc{\la}{\leftarrow}
\nc{\lt}[1]{\hat{x}_{#1}}
\nc{\ebt}[1]{u_{#1}}
\nc{\mxl}[1]{\delta_{#1}}
\nc{\arbl}{h}
\nc{\nvd}[1]{h_{#1}}
\nc{\rne}[1]{\epsilon_{#1}}
\nc{\niv}[1]{z_{#1}}
\nc{\cht}[1]{q_{#1}}
\nc{\cit}[1]{f_{#1}}
\nc{\nct}[1]{g_{#1}}
\nc{\indi}[1]{\llbracket#1\rrbracket}
\nc{\noc}[1]{\xi(#1)}
\nc{\mm}{\bs{\Delta}}
\nc{\mmc}[2]{\Delta_{#1,#2}}
\nc{\seq}[2]{\langle#1\,|\,#2\rangle}
\nc{\set}[2]{\{#1\,|\,#2\}}
\nc{\crv}[1]{\sigma(#1)}
\nc{\vst}[1]{\mathcal{X}_{#1}}
\nc{\prt}[1]{{\uparrow}(#1)}
\nc{\tmc}[1]{\theta_{#1}}
\nc{\tm}[3]{\tau_{#1,#2}(#3)}
\nc{\ac}{a}
\nc{\pbt}[2]{\tilde{p}_{#1,#2}}
\nc{\bpbt}[1]{\tilde{\bs{p}}_{#1}}
\nc{\pbp}[2]{p_{#1,#2}}
\nc{\arf}{f}
\nc{\lr}{\eta}
\nc{\NN}{NN}
\nc{\nnf}[1]{n(#1)}
\nc{\pov}{\bs{y}}
\nc{\poc}[1]{y_{#1}}
\nc{\doc}[1]{\gamma_{#1}}
\nc{\cov}{\mathcal{C}}
\nc{\const}{g}
\nc{\consu}{g'}
\nc{\pcm}[1]{\Psi(#1)}
\nc{\conv}{g''}
\nc{\con}{f}
\nc{\pnt}[1]{p_{#1}}
\nc{\cst}{\mathcal{Z}}
\nc{\dpt}[1]{d_{#1}}
\nc{\ntm}[2]{\tau(#1,#2)}
\nc{\bal}[3]{\sigma_{#1,#2}(#3)}
\nc{\glo}{\lambda}
\nc{\hal}[1]{\lambda_{#1}}
\nc{\tp}[1]{\tilde{p}_{#1}}
\nc{\btp}{\tilde{\bs{p}}}
\nc{\bp}{\bs{p}}
\nc{\pc}[1]{p_{#1}}
\nc{\fv}{\bs{b}}
\nc{\fc}[1]{b_{#1}}
\nc{\hac}[1]{\hat{a}_{#1}}
\nc{\qd}[1]{q_{#1}}
\nc{\cstt}[1]{\mathcal{Z}_{#1}}
\nc{\mdf}{\delta}
\nc{\mga}{\tilde{\gamma}}
\nc{\lev}{\delta}
\nc{\lep}{d}
\nc{\nna}{c}
\nc{\nn}[1]{s_{#1}}
\nc{\mal}{h}
\nc{\les}[1]{\mathcal{H}_{#1}}
\nc{\aes}{\mathcal{H}}
\nc{\dlt}[1]{\delta_{#1}}
\nc{\hpv}{\hat{\bs{y}}}
\nc{\hpc}[1]{\hat{y}_{#1}}
\nc{\lef}{\mathcal{L}}
\nc{\des}[1]{\mathcal{D}_{#1}}
\nc{\anc}[1]{\mathcal{A}_{#1}}
\nc{\alg}{\textsc{HNN}}
\nc{\cbn}{\textsc{CBNN}}
\nc{\cts}{\mathcal{U}}
\nc{\ctb}{\mathcal{V}}
\nc{\ctc}{\mathcal{W}}
\nc{\zco}{z}
\nc{\zct}{\beta}
\nc{\ball}[1]{\zct\con^{#1}}
\nc{\ded}[2]{\mathcal{E}_{#1,#2}}
\nc{\dec}[1]{\mathcal{D}_{#1}}
\nc{\ctd}[1]{\mathcal{V}_{#1}}
\nc{\cte}[1]{\mathcal{W}_{#1}}
\nc{\ala}{\alpha}
\nc{\dst}{k}
\nc{\lepp}{\delta}
\nc{\phc}{\Phi}
\nc{\van}{\mathcal{E}}
\nc{\ndo}[1]{\gamma_{#1}}
\nc{\nmg}{\Omega}
\nc{\nco}{\Psi}
\nc{\asr}{\Lambda}
\nc{\xll}[1]{\mu_{#1}}
\nc{\nen}[1]{\nu_{#1}}
\nc{\otc}[1]{\mathcal{J}_{#1}}
\nc{\pla}[1]{y'_{#1}}
\nc{\bra}{\lambda}
\nc{\rad}[1]{\mathcal{Y}_{#1}}
\nc{\ypr}{y'}
\nc{\str}{\phi}
\nc{\thd}[1]{\theta_{#1}}
\nc{\lpd}[2]{\tilde{\ell}_{#1,#2}}
\nc{\mlm}[1]{\mu_{#1}}
\nc{\SLIV}{Slivkins2009ContextualBW}
\nc{\awi}{\eta}
\nc{\ctn}{\ctc}
\nc{\sea}[1]{\mathcal{Q}_{#1}}
\nc{\awn}{2}
\nc{\lgf}{\log_\con}
\nc{\bo}{\mathcal{O}}
\nc{\nan}{\textsc{NN}}
\nc{\fbal}{\mathcal{B}}
\nc{\sbal}{\mathcal{B}'}
\nc{\sbr}{r}
\nc{\bnr}{\epsilon}
\nc{\exd}{D}
\nc{\bll}[2]{\mathcal{B}(#1,#2)}
\title{A Hierarchical Nearest Neighbour Approach to Contextual Bandits}
\author{
Stephen Pasteris \\
The Alan Turing Institute\\
London UK\\
  \texttt{spasteris@turing.ac.uk} 
\and
Chris Hicks\\
  The Alan Turing Institute\\
London UK\\
  \texttt{c.hicks@turing.ac.uk}  
\and
Vasilios Mavroudis\\
  The Alan Turing Institute\\
London UK\\
  \texttt{vmavroudis@turing.ac.uk}
}
\begin{document}

\maketitle

\begin{abstract}
In this paper we consider the adversarial contextual bandit problem in metric spaces. The paper ``Nearest neighbour with bandit feedback" tackled this problem but when there are many contexts near the decision boundary of the comparator policy it suffers from a high regret. In this paper we eradicate this problem, designing an algorithm in which we can hold out any set of contexts when computing our regret term. Our algorithm builds on that of ``Nearest neighbour with bandit feedback" and hence inherits its extreme computational efficiency.
\end{abstract}

\section{Introduction}

We consider the contextual bandit problem in metric spaces. In this problem we have some (potentially unknown) metric space of bounded diameter. We assume that we have access to an oracle for computing distances. On each trial $t$ we are given a context $\xt{t}$ and must choose an action $\act{t}$ before observing the loss/reward generated by that action. In this paper the contexts are considered implicit and we define $\mmc{s}{t}$ to be the distance between $\xt{s}$ and $\xt{t}$.

This problem has been well-studied in the stochastic case (see e.g. \cite{\SLIV}, \cite{Reeve2018TheKN}, \cite{Perchet2011TheMB} and references therein). In this paper we consider the fully adversarial problem in which no assumptions are made at all about the metric space, context sequence, or loss sequence. As far as we are aware the first non-trivial result for the fully adversarial problem was given by the recent paper \cite{\NN} which bounds the regret with respect to any policy. This regret bound is fantastic when the contexts partition into well separated clusters and the policy is constant on each cluster. However, the bound is poor when there exist many contexts lying close to the decision boundary of the policy. In order to (partially) rectify this \cite{\NN} proposed using binning as a preprocessing step. We note that the optimal bin radius can be implicitly learnt via a type of doubling trick, although this was not discussed in \cite{\NN}. The problem with this is that for different parts of the metric space the optimal bin radii will be different. But \cite{\NN} can only learn a constant bin radius - leading to poor performance. In this paper we fully rectify this problem, designing a new (but related) algorithm \alg\ in which, in the loss bound, we can hold out any set of contexts (which we call a \emp{margin} - i.e. points near the decision boundary) when computing the regret term. In Section \ref{innsec} we give an example of how we improve over \cite{\NN}. We note, however, that in cases where the contexts partition into well separated clusters and the policy is constant on each cluster it may be advantageous to use simple nearest neighbour as in \cite{\NN}.

To achieve this improvement we will utilise the meta-algorithm \textsc{CBNN} of \cite{\NN} which receives, on each trial $t>1$, only some $\pnt{t}\in[t-1]$ (we note that in \cite{\NN} the notation $\nnf{t}$ was used instead). \cite{\NN} analysed the case for when $\pnt{t}$ is chosen such that $\xt{\pnt{t}}$ is an approximate nearest neighbour of $\xt{t}$ in the set $\set{\xt{s}}{s\in[t-1]}$. In this paper we use a different choice of $\pnt{t}$ which we call an approximate \emp{hierarchical nearest neighbour}. In approximate hierarchical nearest neighbour we will construct, online, a partition of of the trials seen so far into different \emp{levels}. On each trial $t$ the algorithm then uses approximate nearest neighbour on each level. $\pnt{t}$ will then be chosen, in a specific way, from one of these approximate nearest neighbours. We note that since our algorithm \alg\ is based on \textsc{CBNN} it inherits its extreme computational efficiency - having a per trial time complexity polylogarithmic in both the number of trials and number of actions when our dataset has an aspect ratio polynomial in the number of trials (which can be enforced by binning) and our metric space has bounded doubling dimension.

We note that, when using exact nearest neighbour, the process of inserting a given trial into our data-structure is essentially the same as the process of constructing a new bin in \cite{\SLIV}. However, the objectives of the data-structures are very different and they are analysed in very different ways (our analysis being far more involved than that of \cite{\SLIV}). Nevertheless, we cite \cite{\SLIV} as an inspiration for this paper.

We also note that the \cbn\ algorithm, upon which \alg\ is based, was inspired by the papers \cite{Delcher1995LogarithmicTimeUA}, \cite{Auer2002TheNM}, \cite{Matsuzaki2008MATHEMATICALET}, \cite{Pearl1982ReverendBO}, \cite{Freund1997UsingAC} and \cite{Herbster2021AGO}.

\section{The Issue with Nearest Neighbour}\label{innsec}

We now give an example of the issue with the algorithm (using binning with nearest neighbour) proposed by \cite{\NN}. We will call this algorithm (when implicitly learning the optimal bin radius via a form of doubling trick) \nan. For simplicity let's assume that the parameter $\pra$, of \nan\ and \alg, is set equal to a constant, although the argument easily extends to tuned parameter values as well. Consider two disjoint balls $\fbal,\sbal$ in $\mathbb{R}^2$ with radii equal to $1$ and $\sbr<1$ respectively. Assume that each ball has $\ntr$ contexts distributed uniformly over it. Suppose we have two actions and a comparator policy such that the decision boundary (of the policy) on each ball is a straight line going through the ball's centre. This is depicted in Figure 1.

\begin{figure}[t]\label{twoballs}
\includegraphics[scale=0.4]{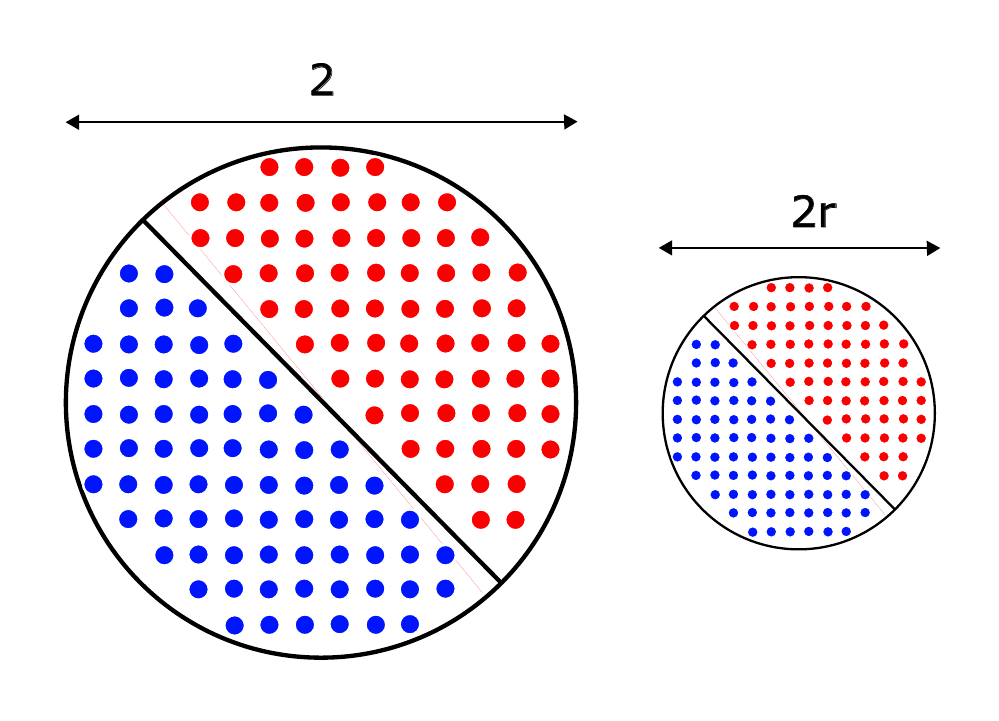}
\centering
\caption{An example of when \nan\ performs poorly. The colour of a context represents the action assigned to it by the comparator policy.}
\end{figure}

First let's analyse \nan\ when working on each ball as a seperate problem. Given a binning radius of $\bnr$, the regret of \nan\ on $\fbal$ is $\tilde{\mathcal{O}}(\sqrt{T}/\bnr+\bnr T)$ and the regret of \nan\ on $\sbal$ is $\tilde{\mathcal{O}}(\sbr\sqrt{T}/\bnr+\bnr T/\sbr)$. For both these problems the optimal value of $\bnr$ leads to a regret of $\tilde{\mathcal{O}}(\ntr^{3/4})$.

However, things change when \nan\ is working on both balls at the same time. In this case, given a binning radius of $\bnr$, the regret of \nan\ is:
\be
\tilde{\mathcal{O}}\left(\left(\sqrt{T}/\bnr+\bnr T\right)+\left(\sbr\sqrt{T}/\bnr+\bnr T/\sbr\right)\right)=\tilde{\mathcal{O}}\left(\sqrt{T}/\bnr+\bnr T/\sbr\right)
\ee
i.e. the sum of the regrets on both balls. This means that the optimal value of $\bnr$ leads to a regret of $\tilde{\mathcal{O}}(\ntr^{3/4}\sbr^{-1/2})$ which can be dramatically higher than if the balls were learnt separately. In fact, when $r\leq\ntr^{-1/2}$ this bound is vacuous - meaning that binning is not helping at all. Our algorithm \alg, however, has a regret of $\tilde{\mathcal{O}}(\ntr^{3/4})$ whenever $1/\sbr$ is polynomial in $\ntr$ - the same as if the balls were learnt seperately.

When working in $\mathbb{R}^\exd$ for $\exd>2$\,, the improvement of \alg\ over \nan\ is even stronger. To see this note that the regret of \nan\ when working on both balls is:
\be
\tilde{\mathcal{O}}\left(\left(\sqrt{T}/\bnr^{\exd-1}+\bnr T\right)+\left(\sqrt{T}(\sbr/\bnr)^{\exd-1}+\bnr T/\sbr\right)\right)=\tilde{\mathcal{O}}\left(\sqrt{T}/\bnr^{\exd-1}+\bnr T/\sbr\right)
\ee
so that the optimal value of $\bnr$ gives us a regret of $\tilde{\mathcal{O}}\left(\sbr^{-(\exd-1)/\exd}\ntr^{(2\exd-1)/(2\exd)}\right)$
which becomes vacuous at $\sbr\leq\ntr^{-1/(2\exd-2)}$. \alg, on the other hand, achieves a regret of $\tilde{\mathcal{O}}(\ntr^{(2\exd-1)/(2\exd)})$ whenever $1/\sbr$ is polynomial in $\ntr$  - the same as if the balls were learnt seperately.

\section{Problem Description}
We consider the following game between \emp{Nature} and \emp{Learner}. We have $\ntr$ \emp{trials} and $\nma$ \emp{actions}. Nature first chooses a matrix $\mm\in[0,1]^{\ntr\times\ntr}$ satisfying the following conditions:
\begin{itemize}
\item For all $s,t\in[\ntr]$ we have $\mmc{s}{t}=\mmc{t}{s}$.
\item For all $t\in[\ntr]$ we have $\mmc{t}{t}=0$.
\item For all $s,t\in[\ntr]$ we have $\mmc{s}{t}\geq0$. For simplicity we will assume, without loss of generality, that for all $s,t\in[\ntr]$ with $s\neq t$ we have $\mmc{s}{t}>0$. This is without loss of generality since if $\mmc{s}{t}=0$ then the trials $s$ and $t$ are equivalent. Dealing with equivalent trials is straightforward in the $\textsc{CBNN}$ algorithm of \cite{\NN} upon which our algorithm \alg\ is based. Trials equivalent to any proceeding trials will be ignored in the data-structure that we construct so have no effect on the computational complexity.
\item For all $r,s,t\in[\ntr]$ we have $\mmc{r}{t}\leq\mmc{r}{s}+\mmc{s}{t}$. This property is called the \emp{triangle inequality}.
\end{itemize}
We note that $\mm$ is a \emp{metric} over $[\ntr]$. Intuitively, every trial $t\in[\ntr]$ is implicitly associated with a \emp{context} $\xt{t}$ and for all $s,t\in[\ntr]$ we have that $\mmc{s}{t}$ is a measure of how similar $\xt{s}$ is to $\xt{t}$ (a smaller value of $\mmc{s}{t}$ means a greater similarity). For all trials $t\in[\ntr]$ and actions $\ac\in[\nma]$ Nature chooses a probability distribution $\lpd{t}{\ac}$ over $[0,1]$ and a \emp{loss} $\lct{t}{\ac}$ is then drawn from $\lpd{t}{\ac}$. We note that Learner has knowledge of only $\ntr$ and $\nma$ (although the requirement of knowledge of $\ntr$ can be removed by a simple doubling trick). The game then proceeds in $\ntr$ trials. On trial $t$ the following happens:
\begin{enumerate}
\item For all $s\in[t]$ Nature reveals $\mmc{s}{t}$ to Learner.
\item Learner chooses an action $\act{t}\in[\nma]$.
\item Nature reveals the loss $\lct{t}{\act{t}}$ to Learner.
\end{enumerate}
The aim of Learner is to minimise the cumulative loss:
\be
\sum_{t\in[\ntr]}\lct{t}{\act{t}}
\ee
Note that, since Nature has complete control over each distribution $\lpd{t}{\ac}$\,, our problem generalises the fully adversarial problem (which is the special case in which each distribution $\lpd{t}{\ac}$ is a delta function). We are considering this generalised problem in order for our bound to be better when there is an element of stochasticity in Nature's choices.

\section{The Algorithm}
We now describe our algorithm \alg. The algorithm takes parameters $\nna\geq1$ and $\pra>0$. We define $\con:=1/2$. Given a non-empty set $\aes\subseteq[\ntr]$ and a trial $t\in[\ntr]$, a $\nna$-\emp{nearest neighbour} of $t$ in the set $\aes$ is any trial $s\in\aes$ in which:
\be
\mmc{s}{t}\leq\nna\min\set{\mmc{r}{t}}{r\in\aes}
\ee
We utilise the algorithm \cbn\ \cite{\NN} with parameter $\pra$ as a subroutine. During the algorithm we will associate each trial $t\in[\ntr]$ with a number $\dpt{t}\in\nat\cup\{0\}$ and initialise by setting $\dpt{1}\la0$.
On each trial $t>1$ we do the following:

\begin{enumerate}
\item $\mal\la\max\set{\dpt{s}}{s\in[t-1]}$
\item For all $\lep\in[\mal]$ set $\les{\lep}\la\set{s\in[t-1]}{\dpt{s}=\lep}$
\item For all $\lep\in[\mal]$ let $\nn{\lep}$ be a $\nna$-nearest neighbour of $t$ in $\les{\lep}$
\item
Let $\lev$ be the maximum value of $\lep\in[\mal]$ such that $\mmc{\nn{\lep}}{t}\leq\con^{\lep}$
\item $\dpt{t}\la\lev+1$
\item $\pnt{t}\la\nn{\lev}$
\item Input $\pnt{t}$ into \cbn
\item Select $\act{t}$ equal to the output of \cbn
\item Receive $\lct{t}{\act{t}}$
\item Update \cbn\ with $\lct{t}{\act{t}}$
\end{enumerate}

We note that when $\nna>0$ we can maintain, for each $\lep\in[\mal]$\,, a \emp{navigating net} \cite{Krauthgamer2004NavigatingNS} over the set $\les{\lep}$ in order to rapidly find $\nn{\lep}$.

\section{Performance}
We now bound the expected cumulative loss of \alg. We first define the various constants used in this section. Let $\str$ be any value in $(0,1)$. We note that the algorithm has no knowledge of $\str$. We define:
\be
\con:=1/2~~~~~~;~~~~~~~\zct:=2/\str
\ee
\be
\bra:=(\str\zct+(1+\zct)/\con)/(\zct(1-\str))~~~~~~;~~~~~~\zco:=(1-\con)\con/(\awn\nna(1+\zct))
\ee
 We define the \emp{aspect ratio} of our dataset as:
\be
\asr:=\min\set{\mmc{s}{t}}{s,t\in[\ntr]\,\wedge\, s\neq t}
\ee
A \emp{policy} is any vector $\pov\in[\nma]^\ntr$. A \emp{margin} is any subset $\mrg\subseteq[\ntr]$. Suppose we have a policy $\pov\in[\nma]^\ntr$ and a margin $\mrg\subseteq[\ntr]$ such that there exists $s,t\in[\ntr]\setminus\mrg$ with $\poc{s}\neq\poc{t}$.
Note that the algorithm has no knowledge of either $\pov$ or $\mrg$. For all $t\in[\ntr]$ we define:
\be
\ndo{t}:=\min\set{\mmc{s}{t}}{s\in[\ntr]\setminus\mrg\,\wedge\,\poc{s}\neq\poc{t}}
\ee
We define:
\be
\nmg:=\min\set{\ndo{t}}{t\in[\ntr]}
\ee
and define $\nco$ as the maximum cardinality of any set $\ars\subseteq[\ntr]$ in which for all $s,t\in\ars$ with $s\neq t$ we have:
\be
\mmc{s}{t}> \zco\min(\ndo{s},\ndo{t})
\ee
For all $t\in[\ntr]$ define:
\be
\thd{t}:=\min\set{\mmc{s}{t}}{s\in[\ntr]\setminus\mrg}
\ee
and define:
\be
\rad{t}:=\set{\poc{s}}{s\in[\ntr]\,\wedge\,\mmc{s}{t}\leq\bra\thd{t}}~~~~~;~~~~~\xll{t}:=\max\set{\expt{\lct{t}{a}}}{a\in\rad{t}}
\ee
noting that for all $t\in[\ntr]\setminus\mrg$ we have $\thd{t}=0$ so that $\xll{t}=\expt{\lct{t}{\poc{t}}}$. \alg\ achieves the following performance:
\begin{theorem}\label{mainth}
The expected cumulative loss of \alg\ is bounded by:
\be
\sum_{t\in[\ntr]}\expt{\lct{t}{\act{t}}}\leq\sum_{t\in[\ntr]}\xll{t}+\tilde{\mathcal{O}}\left(\nco\ln(1/\nmg)^2+\left(\pra-\frac{\nco\ln(1/\nmg)}{\pra}\right)\sqrt{\nma\ntr}\right)
\ee
The running time of \alg\ is in $\tilde{\mathcal{O}}(\ntr)$ and, when $\nna>1$ and $\mm$ has bounded doubling dimension, is also in $\mathcal{O}(\ln(\ntr/\asr)^2\ln(\nma))$. \alg\ requires only $\mathcal{O}(\ntr\ln(\nma))$ space.
\end{theorem}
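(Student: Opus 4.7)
The plan is to bound the expected cumulative loss by splitting it into (i) the ``comparator loss'' that \cbn\ would incur if it were trying to predict the action $\poc{\pnt{t}}$ on each trial $t$, and (ii) \cbn's regret with respect to that surrogate comparator. For (i), if we can verify that $\mmc{\pnt{t}}{t} \leq \bra \thd{t}$ then $\poc{\pnt{t}} \in \rad{t}$, so $\expt{\lct{t}{\poc{\pnt{t}}}} \leq \xll{t}$, yielding the first term of the bound. For (ii), I would invoke the regret bound of \cbn\ from \cite{\NN}, which decomposes into a $\pra \sqrt{\nma \ntr}$ term minus a savings term proportional to a cluster-count-over-$\pra$ factor times $\sqrt{\nma \ntr}$, plus a lower-order ``tree-depth'' term.

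The core structural work is to establish two facts about the hierarchy built by \alg. First, the maximum depth $\mal$ reached is at most $O(\log(1/\nmg))$: any two trials assigned to the same level $\lep$ must be separated by at least roughly $\con^\lep$ in the metric (otherwise the earlier of the two would have been selected as a $\nna$-nearest neighbour of the later at level $\lep$, preventing the later one from being added at that level), and the relevant distances are at least $\nmg$. Second, at each level $\lep$, the set of trials with ``bad'' parents — those for which $\mmc{\pnt{t}}{t}$ exceeds $\bra \thd{t}$ — can be charged to a $\zco \ndo{\cdot}$-packing of the metric restricted to that level. The definition of $\nco$ as the maximum such packing was tailored precisely so that summing across the $O(\log(1/\nmg))$ levels gives a total charge of $O(\nco \ln(1/\nmg))$, matching both the $\nco \ln(1/\nmg)^2$ additive term and the $\nco \ln(1/\nmg)/\pra$ savings factor in the stated bound.

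To verify $\mmc{\pnt{t}}{t} \leq \bra \thd{t}$ for non-exceptional trials, I would exploit the choice of $\lev$ as the \emph{maximum} feasible level: at level $\lev+1$ either no trial exists or the $\nna$-nearest neighbour lies strictly further than $\con^{\lev+1}$ from $t$. Combined with the triangle inequality, the definitions of $\zct, \bra, \zco$ in terms of $\str$, $\con$, and $\nna$, and the fact that non-margin trials contribute $\thd{t}=0$ only when $t\in[\ntr]\setminus\mrg$ (in which case $\xll{t}=\expt{\lct{t}{\poc{t}}}$ directly), the algebra should close to give $\mmc{\pnt{t}}{t} \leq \bra \thd{t}$ except on a set charged by the packing argument of the preceding paragraph. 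Since losses lie in $[0,1]$, exceptional trials contribute at most $\tilde{\mathcal{O}}(\nco \ln(1/\nmg)^2)$.

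The main obstacle, I expect, is the packing argument that translates the uniform $\con^\lep$-separation among trials within level $\lep$ into the $t$-dependent $\zco \ndo{t}$-separation required by the definition of $\nco$; this is where the precise constant $\zco = (1-\con)\con/(\awn \nna(1+\zct))$ is forced by triangle-inequality bookkeeping across approximate nearest neighbours. For the running-time claims, per-trial work is $\mal$ navigating-net queries plus one \cbn\ step, yielding $\tilde{\mathcal{O}}(\ntr)$ overall and $\mathcal{O}(\ln(\ntr/\asr)^2 \ln(\nma))$ per trial under bounded doubling dimension when $\nna>1$; the $\mathcal{O}(\ntr \ln(\nma))$ space bound is inherited directly from \cbn.
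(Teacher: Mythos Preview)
Your high-level split into comparator loss plus \cbn\ regret is correct, and the level-separation fact you cite is Lemma~\ref{lem1}. But the surrogate comparator $\hpc{t}:=\poc{\pnt{t}}$ does not close either side of the argument, and the paper takes a substantially more intricate route.

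On the comparator-loss side: for $t\notin\mrg$ you have $\thd{t}=0$, hence $\rad{t}=\{\poc{t}\}$, and you would need $\poc{\pnt{t}}=\poc{t}$. But $\pnt{t}$ may lie in $\mrg$ with an arbitrary label, and no packing argument controls how many such $t$ there are. If instead you meant $\hpc{t}:=\poc{t}$, the comparator loss is trivially fine (since always $\poc{t}\in\rad{t}$), but then the cut count $\sum_t\indi{\poc{t}\neq\poc{\pnt{t}}}$ is uncontrolled: labels on $\mrg$ are unconstrained, so margin-to-margin tree edges can cross labels freely, and this number is not bounded by $O(\nco\ln(1/\nmg))$. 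Separately, your claim that the maximum depth $\mal$ is $O(\ln(1/\nmg))$ is false; it is $O(\ln(1/\asr))$ in general, and only trials in a particular boundary set provably have depth $O(\ln(1/\nmg))$ (Lemma~\ref{fixlem7}).

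The idea you are missing is a bespoke surrogate $\hpv$ defined inductively along the tree. The paper introduces $\cts$, the set of $t$ for which all non-margin trials within $\zct\con^{\dpt{t}}$ of $t$ agree on $\pov$, and its boundary $\ctb$. On $t\in\cts$ with a nearby non-margin witness one sets $\hpc{t}$ to that common label; otherwise $\hpc{t}:=\hpc{\pnt{t}}$, with one extra case when $t\notin\cts$ is the parent of some $s\in\cts$. This construction forces every cut $\hpc{t}\neq\hpc{\pnt{t}}$ to lie in $\ctb$ or be the parent of some element of $\ctb$ (Lemma~\ref{cnanclem4}). A two-stage packing then extracts $\ctn\subseteq\ctb$ with $|\ctn|\leq\nco$ and shows each $w\in\ctn$ absorbs at most $\dpt{w}+1\in O(\ln(1/\nmg))$ elements of $\ctb$ (Lemmas~\ref{fixlem2}--\ref{fixlem7}), yielding $|\ctb|\in O(\nco\ln(1/\nmg))$. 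The delicate step is Lemma~\ref{mmcleqphieq2}, which shows $\hpc{t}\in\rad{t}$ for every $t\in\mrg\cap\cts$ --- this, not the condition $\mmc{\pnt{t}}{t}\leq\bra\thd{t}$, is where $\bra$ actually enters --- while the remaining exceptions form the set $\van\subseteq\bigcup_{s\in\ctb}\anc{s}$ of size $O(\nco\ln(1/\nmg)^2)$. Your per-level packing intuition is in the right spirit, but it does not substitute for this tailored comparator and the two-stage extraction of $\ctn$ from $\ctb$.
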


We now point out the effect of the choice of the margin $\mrg$ on our bound. Note that increasing $\mrg$ increases $\ndo{t}$ and $\thd{t}$ (for some trials $t\in[\ntr]$). The increase in $\ndo{t}$ (for some $t$) can decrease the value $\nco$ which helps us. However, the increase in $\thd{t}$ (for some $t$) can cause $\rad{t}$ to grow - potentially increasing $\xll{t}$ which hurts us. Hence, there will be a sweet-spot - the optimal margin $\mrg$. Trials in the optimal margin will correspond to contexts that are close to the decision boundary of the policy (but just \emp{how} close will depend on the location of the context and the density of contexts in its vicinity).

We note that we can use binning to enforce that $1/\asr$ is polynomial in $\ntr$ - hence ensuring polylogarithmic time per trial when $\mm$ has bounded doubling dimension and $\nna>1$. To do this we choose some $\epsilon$ with $1/\epsilon$ polynomial in $\ntr$ and, on any trial $t\in[\ntr]$ such that there exists $s\in[t-1]$ with $\mmc{s}{t}<\epsilon$ we treat $t$ as equivalent to $s$ (in \cbn) and ignore $t$ in our data-structure. We note, however, that this process can have an effect on the loss bound.

\nc{\dms}{d}
\nc{\reld}{\mathbb{R}^\dms}
\nc{\aco}{x}
\nc{\ra}{r}
\nc{\eun}[1]{\|#1\|}
\nc{\ext}{\tilde{y}}
\nc{\exm}{\tilde{\mrg}}
\nc{\db}[1]{\mathcal{D}(#1)}
\nc{\sep}{\epsilon}
\nc{\cnst}{C}
\nc{\nmb}{N}
\nc{\zi}[1]{v_{#1}}
\nc{\ri}[1]{r_{#1}}
\nc{\zxp}{x'}
\nc{\soc}{\mathcal{X}}
\nc{\bbr}{\xi}
\nc{\dbn}{\mathcal{D}}
\nc{\ran}[1]{\mathcal{J}(#1)}
\nc{\ire}[1]{R(#1)}

\section{When in Euclidean Space}
In order to give insight into Theorem \ref{mainth} we now analyse it in the case that the (implicit) contexts lie in the euclidean space $\reld$ (for some constant $\dms\in\nat$) and our metric is the euclidean metric, giving a relatively simple loss bound. We note, however, that we do not use the full power of Theorem \ref{mainth} here - for instance, we crudely bound $\xll{t}$ by $1$ when $t\in\mrg$.  

We make the following definitions. For all $\aco\in\reld$ let $\eun{\aco}$ be the euclidean norm of $\aco$. Given $\aco\in\reld$ and $\ra>0$ we define the \emp{ball}: 
\be
\bll{\aco}{\ra}:=\set{\zxp\in\reld}{\eun{\aco-\zxp}\leq\ra}
\ee
Here we assume that there exists a sequence of \emp{contexts} $\seq{\xt{t}}{t\in[\ntr]}\subseteq\bll{0}{1/2}$ such that for all $s,t\in[\ntr]$ we have $\mmc{s}{t}=\eun{\xt{s}-\xt{t}}$. An \emp{extended policy} is any function $\ext:\bll{0}{1/2}\rightarrow[\nma]$. Consider any such extended policy $\ext$. We define the \emp{decision boundary} as:
\be
\dbn:=\set{\aco\in\bll{0}{1/2}}{\forall\sep>0\,,\,\exists\, \zxp\in\bll{\aco}{\sep}:\ext(\zxp)\neq\ext(\aco)}
\ee
Theorem \ref{mainth} then gives us the following.
\begin{theorem}\label{maincor}
Choose any constants $\cnst>0$ and $\bbr>1$. Suppose we have some $\nmb\in\nat$ and any sequence $\seq{(\zi{i},\ri{i})}{i\in[\nmb]}\subseteq\reld\times\mathbb{R}_+$ with:
\begin{itemize}
\item $\dbn\subseteq\bigcup_{i\in[\nmb]}\bll{\zi{i}}{\ri{i}}$
\item $\min\set{\ri{i}}{i\in[\nmb]}\geq\ntr^{-\cnst}$
\end{itemize}
We define $\mrg$ to be the set of all $t\in[\ntr]$ in which there exists $i\in[\nmb]$ with $\xt{t}\in\bll{\zi{i}}{\bbr\ri{i}}$. The expected cumulative loss of \alg\ (with any constant $\nna$) is then bounded by:
\be
\sum_{t\in[\ntr]}\expt{\lct{t}{\act{t}}}\leq\sum_{t\in[\ntr]\setminus\mrg}\expt{\lct{t}{\ext(\xt{t})}}+|\mrg|+\tilde{\mathcal{O}}\left(\left(\pra-\frac{\nmb}{\pra}\right)\sqrt{\nma\ntr}\right)
\ee
\end{theorem}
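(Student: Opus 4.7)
The strategy is to apply Theorem \ref{mainth} with the policy $\pov$ defined by $\poc{t} := \ext(\xt{t})$ and the margin $\mrg$ supplied by the hypothesis, then to specialise the three quantities $\sum_t \xll{t}$, $\nmg$ and $\nco$ to the Euclidean setting.

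For the $\sum_t \xll{t}$ term, every $t \notin \mrg$ satisfies $\thd{t} = \mmc{t}{t} = 0$, hence $\rad{t} = \{\poc{t}\}$ and $\xll{t} = \expt{\lct{t}{\ext(\xt{t})}}$; for $t \in \mrg$ we use the crude bound $\xll{t} \leq 1$. Summing gives $\sum_t \xll{t} \leq \sum_{t \notin \mrg} \expt{\lct{t}{\ext(\xt{t})}} + |\mrg|$, which matches the first two terms of the target.

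The geometric core, which feeds both the lower bound on $\nmg$ and the upper bound on $\nco$, is the following. Fix $t \in [\ntr]$ and any $s \in [\ntr] \setminus \mrg$ with $\poc{s} \neq \poc{t}$. Since $\ext$ takes different values at $\xt{t}$ and $\xt{s}$, the straight line segment between them must meet a point $p \in \dbn$, and by the covering hypothesis $p \in \bll{\zi{i}}{\ri{i}}$ for some $i \in [\nmb]$. Because $s \notin \mrg$, $\xt{s}$ lies outside $\bll{\zi{i}}{\bbr \ri{i}}$, so the subsegment from $p$ to $\xt{s}$ has length at least $(\bbr - 1)\ri{i}$, which yields $\mmc{s}{t} > (\bbr - 1)\ri{i}$. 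Choosing $s$ to realise $\ndo{t}$ and writing $i^*(t)$ for the corresponding index, we obtain $\ndo{t} > (\bbr-1)\ri{i^*(t)} \geq (\bbr-1)\ntr^{-\cnst}$ and $\eun{\xt{t} - \zi{i^*(t)}} \leq \ri{i^*(t)} + \ndo{t}$. The first inequality gives $\nmg \geq (\bbr-1)\ntr^{-\cnst}$, so $\ln(1/\nmg) = \bo(\ln \ntr)$.

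The main obstacle is bounding $\nco$. Let $\ars \subseteq [\ntr]$ be $\zco$-separated in the sense that $\mmc{s}{t} > \zco \min(\ndo{s}, \ndo{t})$ for all distinct $s, t \in \ars$. I partition $\ars$ first by $i^*(t) \in [\nmb]$ and then dyadically by the integer $k$ with $\ndo{t} \in [\ri{i^*(t)} 2^k, \ri{i^*(t)} 2^{k+1})$. The inequalities above force $k \geq \log_2(\bbr - 1)$ and $k \leq \cnst \log_2 \ntr$, so there are $\bo(\ln \ntr)$ scales per $i$. Inside a fixed $(i,k)$-group every $\xt{t}$ lies in $\bll{\zi{i}}{\ri{i}(1 + 2^{k+1})}$ while pairwise distances exceed $\zco \ri{i} 2^k$; a standard Euclidean volume packing in $\reld$ admits at most $\bo(((1 + 2^{k+1})/(\zco 2^k))^\dms) = \bo(1)$ such points, using that $\bbr$, $\zco$, $\dms$ are absolute constants and $2^k \geq \bbr - 1$. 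Summing over scales and balls, $|\ars| = \bo(\nmb \ln \ntr)$ and hence $\nco = \tmo(\nmb)$. Substituting into Theorem \ref{mainth}, the regret term becomes $\tmo(\nmb + (\pra - \nmb/\pra)\sqrt{\nma \ntr}) = \tmo((\pra - \nmb/\pra)\sqrt{\nma \ntr})$, as required.
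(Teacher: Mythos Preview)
Your proof is correct and follows the same overall strategy as the paper: instantiate Theorem~\ref{mainth} with $\poc{t}=\ext(\xt{t})$, bound $\sum_t\xll{t}$ trivially, associate each $t$ to a covering ball index via a boundary point on the segment to the nearest opposite-label non-margin context, and then run a dyadic-scale packing argument to control $\nco$.

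The one noteworthy difference is in the dyadic variable. The paper indexes scales by $j_t$, the smallest $j$ with $\xt{t}\in\bll{\zi{i_t}}{2^{j}\bbr\ri{i_t}}$, and then needs two separate lemmas (the $j_t>0$ and $j_t=0$ cases) to convert this back into a lower bound on $\ndo{t}$ for the packing step. You instead index directly by the dyadic scale of $\ndo{t}/\ri{i^*(t)}$, which is what the separation condition in the definition of $\nco$ actually cares about; this makes the packing argument a single line and eliminates the case split. Your containment $\eun{\xt{t}-\zi{i^*(t)}}\le\ri{i^*(t)}+\ndo{t}$ plays the role that the definition of $j_t$ plays in the paper. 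Both routes give $\nco\in\bo(\nmb\ln\ntr)$ and $\ln(1/\nmg)\in\bo(\ln\ntr)$, and the final absorption of the additive $\tmo(\nmb)$ term into $\tmo((\pra+\nmb/\pra)\sqrt{\nma\ntr})$ is the same AM--GM step the paper uses (implicitly assuming $\nmb\le\ntr$, else the bound is vacuous). A couple of minor inaccuracies: your lower bound on $k$ should read $2^{k+1}>\bbr-1$ rather than $k\ge\log_2(\bbr-1)$, and strictly speaking the paper's Theorem~\ref{mainth} requires the existence of $s,t\notin\mrg$ with $\poc{s}\neq\poc{t}$, which you do not check explicitly; neither affects the argument.
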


In Figure 2 we give an example of the objects appearing in Theorem \ref{maincor}.

\begin{figure}[t]\label{eucfig}
\includegraphics[scale=0.4]{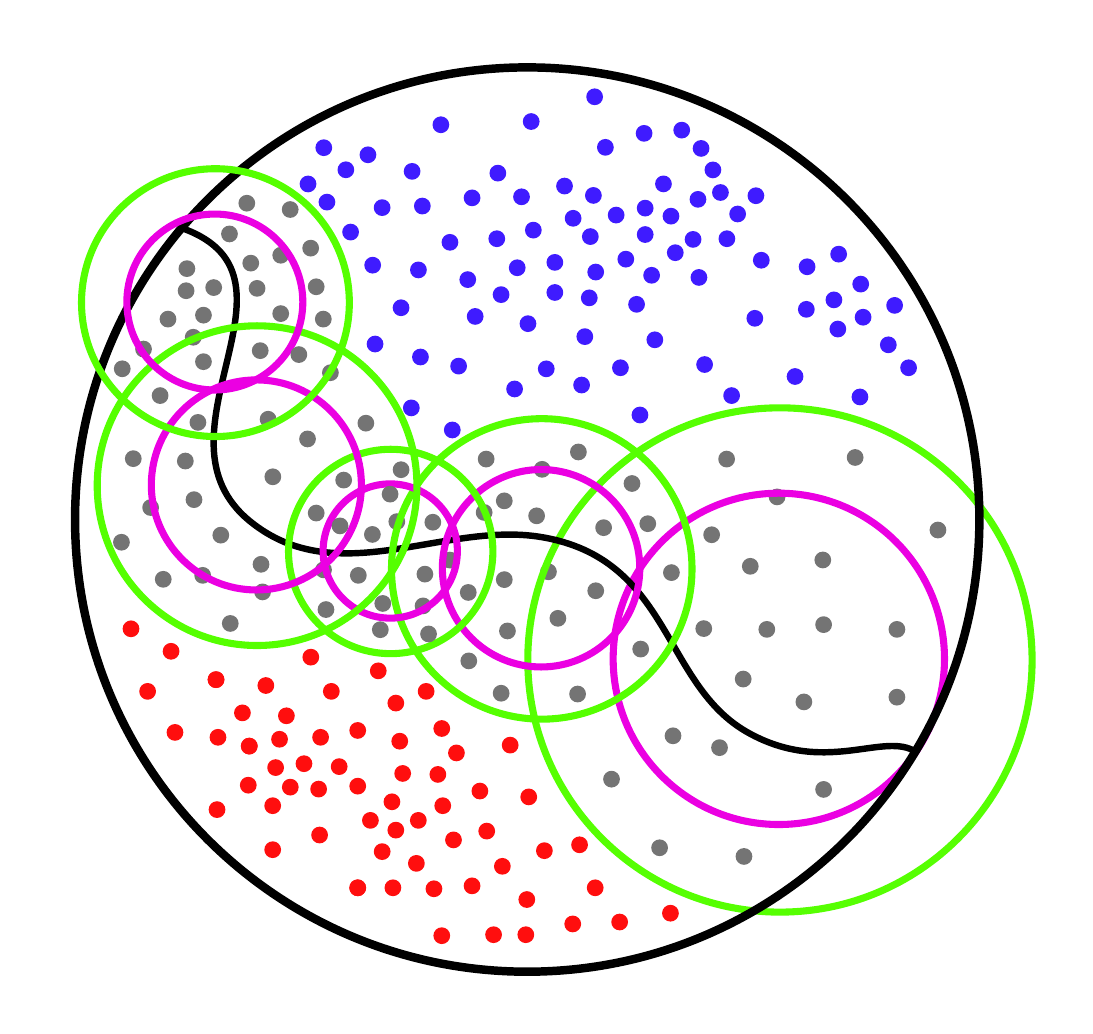}
\centering
\caption{An example with $\nma=2$ and $\dms=2$. Here we have $\nmb=5$. The black curve is the decision boundary $\dbn$. The purple balls are those in $\set{\bll{\zi{i}}{\ri{i}}}{i\in[\nmb]}$ and the green balls are those in $\set{\bll{\zi{i}}{\bbr\ri{i}}}{i\in[\nmb]}$. The grey contexts are those in $\set{\xt{t}}{t\in\mrg}$, the red contexts are those in $\set{\xt{t}}{t\in[\ntr]\setminus\mrg\,\wedge\,\ext(\xt{t})=1}$, and the blue contexts are those in $\set{\xt{t}}{t\in[\ntr]\setminus\mrg\,\wedge\,\ext(\xt{t})=2}$. Note that the purple balls cover the decision boundary and the grey contexts are those covered by the green balls.}
\end{figure}

\section{Proof of Theorem \ref{mainth}}

We now prove Theorem \ref{mainth}. We will often use the fact that, for all $t\in[\ntr]$\,, we have $\dpt{\pnt{t}}=\dpt{t}-1$ and $\mmc{t}{\pnt{t}}\leq\con^{\dpt{t}-1}$ in this analysis.

\begin{definition}
Consider the rooted tree with vertex set $[\ntr]$ such that, for all $t\in[\ntr]\setminus\{1\}$, we have that $\pnt{t}$ is the parent of $t$. Let $\lef$ be the set of leaves of this tree. Given $t\in[\ntr]$ we then define $\des{t}$ to be the set of all descendants of $t$ and define $\anc{t}$ to be the set of all ancestors of $t$.
\end{definition}

\begin{lemma}\label{lem1}
For all $r,t\in[\ntr]$ with $r\neq t$ and $\dpt{r}=\dpt{t}$ we have that $\mmc{r}{t}>\con^{\dpt{r}}/\nna$.
\end{lemma}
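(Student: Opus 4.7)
The plan is to directly unpack the algorithm's depth assignment rule at trial $t$. Without loss of generality assume $r < t$; the case $r > t$ is symmetric under relabelling. I will write $d := \dpt{r} = \dpt{t}$ and reason about the state of the algorithm when $t$ is processed.

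First I would observe that $r \in \les{d}$ at trial $t$, since $r \in [t-1]$ and $\dpt{r} = d$. In particular this forces $\mal \geq d$ at that moment, so level $d$ is inspected in step 3. Because $\nn{d}$ is a $\nna$-nearest neighbour of $t$ in $\les{d}$ and $r \in \les{d}$, the minimisation in the definition of $\nna$-nearest neighbour gives $\mmc{\nn{d}}{t} \leq \nna\, \mmc{r}{t}$.

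Second I would use the choice of $\lev$ in step 4. By step 5 we have $\lev = \dpt{t} - 1 = d - 1$ when $t$ is processed. Because $\lev$ is the \emph{maximum} element of $[\mal]$ satisfying $\mmc{\nn{\lep}}{t} \leq \con^{\lep}$, and $d \leq \mal$, level $d$ must have failed this inequality, i.e., $\mmc{\nn{d}}{t} > \con^{d}$. Chaining with the previous bound gives $\nna\, \mmc{r}{t} \geq \mmc{\nn{d}}{t} > \con^{d} = \con^{\dpt{r}}$, and dividing by $\nna$ yields the claim.

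The only subtlety I anticipate is confirming that level $d$ is within the inspection range at trial $t$, rather than being beyond $\mal$ and thus not considered by step 3 at all. This is precisely what the existence of $r$ with $\dpt{r} = d$ already in the data structure guarantees, so once the WLOG ordering is fixed the argument is self-contained.
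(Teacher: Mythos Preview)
Your proposal is correct and follows essentially the same approach as the paper's proof: both fix $r<t$ without loss of generality, use that $r$ lies in level $d$ to bound $\mmc{\nn{d}}{t}\leq\nna\,\mmc{r}{t}$, and then invoke the maximality of $\lev=d-1$ to force $\mmc{\nn{d}}{t}>\con^{d}$. The only cosmetic difference is that the paper phrases it as a proof by contradiction whereas you argue directly, and you make explicit the check that $d\leq\mal$ (which the paper leaves implicit).
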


\begin{proof}
Suppose, for contradiction, the converse: that $\mmc{r}{t}\leq\con^{\dpt{r}}/\nna$. Without loss of generality assume $r<t$. Let $\mal:=\max\set{\dpt{s}}{s\in[t-1]}$ and for all $\lep\in[\mal]$ let $\nn{\lep}$ be as created by the algorithm on trial $t$. Let $q:=\nn{\dpt{r}}$. Since $q$ is a $\nna$-nearest neighbour of $t$ in the set $\set{s\in[t-1]}{\dpt{s}=\dpt{r}}$ (which contains $r$) we must have that
$\mmc{q}{t}\leq\nna\mmc{r}{t}\leq\con^{\dpt{r}}$.
But from the algorithm we have that $\dpt{t}-1$ is the maximum value of $\lep\in[\mal]$ such that $\mmc{\nn{\lep}}{t}\leq\con^{\lep}$ so since $\dpt{t}-1=\dpt{r}-1<\dpt{r}$ we have a contradiction.
\end{proof}

\begin{definition}
Define $\cts$ to be the set of all trials $t\in[\ntr]$ in which for all $r,s\in[\ntr]\setminus\mrg$ with $\mmc{r}{t}\leq\ball{\dpt{t}}$ and $\mmc{s}{t}\leq\ball{\dpt{t}}$ we have $\poc{r}=\poc{s}$.
\end{definition}

\begin{lemma}\label{cnanclem2}
Given $s,t\in[\ntr]$ with $s\in\cts$ and $t\in\des{s}$ we have $t\in\cts$.
\end{lemma}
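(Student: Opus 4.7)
The plan is to reduce the claim about $t$ to the hypothesis about $s$ by walking along the ancestor chain from $t$ up to $s$, showing that any trial close enough to $t$ stays close enough to $s$. If $t=s$ the claim is immediate, so assume $t\neq s$ and let $t=t_0,t_1,\ldots,t_k=s$ with $t_{i+1}=\pnt{t_i}$. The algorithm guarantees $\dpt{\pnt{u}}=\dpt{u}-1$ and $\mmc{u}{\pnt{u}}\leq\con^{\dpt{u}-1}$ for every non-root $u$, so $\dpt{t_i}=\dpt{t}-i$ and $\mmc{t_i}{t_{i+1}}\leq\con^{\dpt{t}-i-1}$.

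The first step is a telescoping triangle inequality along this chain:
\begin{equation*}
\mmc{t}{s}\leq\sum_{i=0}^{k-1}\con^{\dpt{t}-i-1}=\sum_{j=\dpt{s}}^{\dpt{t}-1}\con^{j}=\frac{\con^{\dpt{s}}-\con^{\dpt{t}}}{1-\con}=2\con^{\dpt{s}}-2\con^{\dpt{t}}.
\end{equation*}
The second step is, for any $r',s'\in[\ntr]\setminus\mrg$ satisfying $\mmc{r'}{t}\leq\ball{\dpt{t}}$ and $\mmc{s'}{t}\leq\ball{\dpt{t}}$, one further triangle inequality:
\begin{equation*}
\mmc{r'}{s}\leq\mmc{r'}{t}+\mmc{t}{s}\leq\zct\con^{\dpt{t}}+2\con^{\dpt{s}}-2\con^{\dpt{t}}=(\zct-2)\con^{\dpt{t}}+2\con^{\dpt{s}}.
\end{equation*}
Because $\zct=2/\str>2$ (as $\str\in(0,1)$) and $\con^{\dpt{t}}<\con^{\dpt{s}}$ (since $\dpt{t}>\dpt{s}$ and $\con<1$), the right-hand side is bounded by $(\zct-2)\con^{\dpt{s}}+2\con^{\dpt{s}}=\ball{\dpt{s}}$, and the same bound holds for $\mmc{s'}{s}$. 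Invoking $s\in\cts$ on $r',s'$ then yields $\poc{r'}=\poc{s'}$, so $t\in\cts$.

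The main obstacle is essentially just bookkeeping the constants: the telescoped path contributes a factor $1/(1-\con)=2$ that appears as a $2\con^{\dpt{s}}$ summand, while the triangle inequality from $r'$ to $t$ leaves a slack $\zct\con^{\dpt{t}}$. Both must fit inside the enlarged budget $\zct\con^{\dpt{s}}$, and this works precisely because the paper's constant choice $\zct=2/\str$ with $\str\in(0,1)$ makes $\zct-2>0$, so the leftover $(\zct-2)\con^{\dpt{t}}$ can be absorbed using $\dpt{t}>\dpt{s}$. No properties of $\cts$ beyond its defining condition are required.
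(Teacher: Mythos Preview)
Your proof is correct and follows essentially the same approach as the paper: both arguments push a trial that is within $\zct\con^{\dpt{t}}$ of $t$ back along the ancestor chain and verify it lands within $\zct\con^{\dpt{s}}$ of $s$, using the key constant relation $\zct\geq 1/(1-\con)=2$. The only difference is presentational: the paper inducts on $\dpt{t}$ and moves one edge at a time (checking $(\zct\con+1)\leq\zct$), whereas you sum the whole geometric series in one shot (checking $(\zct-2)\con^{\dpt{t}}+2\con^{\dpt{s}}\leq\zct\con^{\dpt{s}}$); these are the same computation unrolled versus stepped.
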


\begin{proof}
Noting that $\dpt{t}\geq\dpt{s}$ we fix $s$ and prove by induction on $\dpt{t}$. When $\dpt{t}=\dpt{s}$ we have $t=s$ so the result is immediate. Now suppose, for some $\lep\geq\dpt{s}$\,, that the inductive hypothesis holds for all $t$ with $\dpt{t}=\lep$. Now take $t$ with $\dpt{t}=\lep+1$. Since $t\in\des{s}$ and $t\neq s$ we have $\pnt{t}\in\des{s}$. So since $\dpt{\pnt{t}}=\lep$ we have, by the inductive hypothesis, that $\pnt{t}\in\cts$. Now take any $q,r\in[\ntr]\setminus\mrg$ with $\mmc{q}{t}\leq\ball{\dpt{t}}$ and $\mmc{r}{t}\leq\ball{\dpt{t}}$. From the algorithm we have that $\mmc{t}{\pnt{t}}\leq\con^{\dpt{t}-1}$ and hence, by the triangle inequality, we have:
\be
\mmc{r}{\pnt{t}}\leq\mmc{r}{t}+\mmc{t}{\pnt{t}}\leq\zct\con^{\dpt{t}}+\con^{\dpt{t}-1}=(\zct\con+1)\con^{\dpt{t}-1}\leq\zct\con^{\dpt{t}-1}
\ee
Similarly we have $\mmc{q}{\pnt{t}}\leq\zct\con^{\dpt{t}-1}$.
So since $\dpt{\pnt{t}}=\dpt{t}-1$ and $\pnt{t}\in\cts$ we must have that $\poc{q}=\poc{r}$. Hence, we must have that $t\in\cts$ so the result holds by induction.
\end{proof}

\begin{definition}
Let $\ctb$ be the set of all $t\in[\ntr]$ such that either: 
\begin{itemize}
\item $t\in\cts$ and $\pnt{t}\notin\cts$
\item $t\in\lef$ and $t\notin\cts$
\end{itemize}
\end{definition}

\begin{definition}
Let $\ctn$ be the set of all $t\in\ctb$ such that there does not exist $s\in\ctb$ with $\dpt{s}>\dpt{t}$ and $\mmc{s}{t}\leq(\con^{\dpt{t}}-\con^{\dpt{s}})/(\awn\nna)$ 
\end{definition}

\begin{definition}
For any $t\in\ctn$ let $\sea{t}$ be equal to the set of all $s\in\ctb\setminus\ctn$ in which $\mmc{s}{t}\leq\con^{\dpt{s}}/(\awn\nna)$
\end{definition}

\begin{lemma}\label{fixlem1}
Given $s\in\ctb\setminus\ctn$ and $\lep\in\nat$ we either have that there exists $t\in\ctn$ with $s\in\sea{t}$ or that there exists some $r\in\ctb\setminus\ctn$ with $\dpt{r}\geq\lep$ and $\mmc{r}{s}\leq(\con^{\dpt{s}}-\con^{\dpt{r}})/(\awn\nna)$
\end{lemma}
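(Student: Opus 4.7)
My plan is a chain construction: starting from $s$, I repeatedly use the fact that elements of $\ctb \setminus \ctn$ have strict ``successors'' in $\ctb$ (of strictly greater depth and suitably close) and follow this chain until it either lands in $\ctn$, giving the first alternative, or reaches depth at least $\lep$, giving the second.

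Formally, I set $s_0 := s$ and define $s_1, s_2, \ldots$ inductively as follows. At step $i$, if $s_i \in \ctn$ or $\dpt{s_i} \geq \lep$, the construction halts. Otherwise $s_i \in \ctb \setminus \ctn$, so the contrapositive of the definition of $\ctn$ guarantees some $s_{i+1} \in \ctb$ with $\dpt{s_{i+1}} > \dpt{s_i}$ and $\mmc{s_{i+1}}{s_i} \leq (\con^{\dpt{s_i}} - \con^{\dpt{s_{i+1}}})/(\awn\nna)$. Along this chain I maintain the invariant
\[
\mmc{s_i}{s} \leq \frac{\con^{\dpt{s}} - \con^{\dpt{s_i}}}{\awn\nna}.
\]
The base case $i=0$ is trivial (both sides equal $0$), and the inductive step is a one-line application of the triangle inequality in which the $\con^{\dpt{s_i}}$ terms telescope away.

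Because the depths $\dpt{s_0} < \dpt{s_1} < \cdots$ are strictly increasing natural numbers bounded by $\ntr$, the construction halts at some $s_k$. If $s_k \in \ctn$, then the invariant together with $\dpt{s_k} > \dpt{s}$ yields $\mmc{s}{s_k} < \con^{\dpt{s}}/(\awn\nna)$, so $s \in \sea{s_k}$ (the first alternative). Otherwise the halting criterion forces $\dpt{s_k} \geq \lep$ with $s_k \in \ctb \setminus \ctn$, so $r := s_k$ witnesses the second alternative; the edge case $\dpt{s} \geq \lep$ is absorbed by halting immediately with $r := s$.

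I do not expect any real obstacle. The only point worth checking is that the construction is well-defined whenever $s_i \in \ctb \setminus \ctn$ with $\dpt{s_i} < \lep$, but this is literally what the defining property of $\ctn$ gives us. The essential ingredients are thus just the triangle inequality, the telescoping cancellation $(\con^{\dpt{s}} - \con^{\dpt{s_i}}) + (\con^{\dpt{s_i}} - \con^{\dpt{s_{i+1}}}) = \con^{\dpt{s}} - \con^{\dpt{s_{i+1}}}$, and the strict monotonicity of depths that guarantees termination.
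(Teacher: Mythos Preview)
Your proposal is correct and follows essentially the same approach as the paper: both arguments build a chain in $\ctb$ with strictly increasing depths, use the triangle inequality so that the distance bounds telescope to $\mmc{s_i}{s}\leq(\con^{\dpt{s}}-\con^{\dpt{s_i}})/(\awn\nna)$, and terminate either by landing in $\ctn$ (giving $s\in\sea{s_k}$) or by reaching depth $\geq\lep$. The only cosmetic difference is that the paper packages the chain as an induction on $\lep$ (assuming the first alternative fails), while you construct the chain directly and read off which alternative holds at the halting step.
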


\begin{proof}
If there exists $t\in\ctn$ with $s\in\sea{t}$ then we're done so assume otherwise. We prove by induction on $\lep$. We immediately have the result for $\lep=0$ by choosing $r:=s$. Now suppose, for some $\lep'\in\nat\cup\{0\}$\,,  that the inductive hypothesis holds when $\lep=\lep'$ and consider the case that $\lep=\lep'+1$. By the inductive hypothesis choose $q\in\ctb\setminus\ctn$ with $\dpt{q}\geq\lep'$ and $\mmc{q}{s}\leq(\con^{\dpt{s}}-\con^{\dpt{q}})/(\awn\nna)$. Since $q\in\ctb\setminus\ctn$ we have, by definition of $\ctn$, that there exists $u\in\ctb$ with $\dpt{u}>\dpt{q}$ and $\mmc{u}{q}\leq(\con^{\dpt{q}}-\con^{\dpt{u}})/(\awn\nna)$. By the triangle inequality we then have:
\be
\mmc{u}{s}\leq\mmc{u}{q}+\mmc{q}{s}\leq(\con^{\dpt{q}}-\con^{\dpt{u}})/(\awn\nna)+(\con^{\dpt{s}}-\con^{\dpt{q}})/(\awn\nna)=(\con^{\dpt{s}}-\con^{\dpt{u}})/(\awn\nna)
\ee
If it was the case that $u\in\ctn$ we would have, from this inequality, that $s\in\sea{u}$ which is a contradiction. Hence, we have $u\in\ctb\setminus\ctn$. Since $\dpt{u}>\dpt{q}$ and $\dpt{q}\geq\lep'$ we have $\dpt{u}\geq\lep'+1=\lep$. By the above inequality we then have the result by choosing $r:=u$. This completes the inductive proof.
\end{proof}

\begin{lemma}\label{fixlem2}
Given $s\in\ctb\setminus\ctn$ there exists $t\in\ctn$ with $s\in\sea{t}$.
\end{lemma}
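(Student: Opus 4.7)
The plan is to derive Lemma \ref{fixlem2} as a direct finiteness consequence of Lemma \ref{fixlem1}. The conclusion of Lemma \ref{fixlem1} is a disjunction: for every $\lep$ either the desired $t \in \ctn$ already exists with $s \in \sea{t}$, or one can keep producing elements of $\ctb \setminus \ctn$ of arbitrarily large depth. So the natural route is proof by contradiction: assume no $t \in \ctn$ has $s \in \sea{t}$, and exploit the fact that depths cannot be arbitrarily large in a finite run to reach an absurdity.

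First I would observe that the depth function $\dpt{\cdot}$ is uniformly bounded on $[\ntr]$. This is immediate from the algorithm: we set $\dpt{1} \la 0$ and on trial $t > 1$ we set $\dpt{t} \la \lev + 1$ where $\lev \leq \mal = \max_{s \in [t-1]} \dpt{s}$, so by induction $\dpt{t} \leq t - 1 \leq \ntr - 1$ for every $t$. Let $M := \max_{r \in [\ntr]} \dpt{r}$, which is finite.

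Then I would assume for contradiction that there is no $t \in \ctn$ with $s \in \sea{t}$, and invoke Lemma \ref{fixlem1} with $\lep := M + 1$. The first alternative of Lemma \ref{fixlem1} is ruled out by the contradiction hypothesis, so the second alternative yields some $r \in \ctb \setminus \ctn$ with $\dpt{r} \geq M + 1$, which directly contradicts the maximality of $M$. Hence the assumed nonexistence is false, and the lemma follows.

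I do not anticipate a genuine obstacle here: all the work has been done in Lemma \ref{fixlem1}, and this lemma is essentially a pigeonhole/finiteness wrap-up. The only thing worth checking carefully is that the quantifier ``for all $\lep \in \nat$'' in Lemma \ref{fixlem1} is available at the large value $\lep = M + 1$ (it is, since $M + 1 \in \nat$), and that the inequality $\mmc{r}{s} \leq (\con^{\dpt{s}} - \con^{\dpt{r}})/(\awn\nna)$ guaranteed by Lemma \ref{fixlem1} is not needed in the contradiction step — only the depth lower bound $\dpt{r} \geq \lep$ is used.
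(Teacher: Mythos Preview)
Your proof is correct and follows essentially the same approach as the paper's own proof: assume for contradiction that no $t\in\ctn$ has $s\in\sea{t}$, apply Lemma \ref{fixlem1} with a value of $\lep$ exceeding the maximum possible depth, and observe that the resulting $r$ with $\dpt{r}\geq\lep$ is impossible. The only cosmetic difference is that the paper plugs in $\lep:=\ntr+1$ directly (relying implicitly on $\dpt{r}\leq\ntr$), while you first define $M:=\max_{r}\dpt{r}$ and use $\lep:=M+1$; your added justification that $\dpt{t}\leq t-1$ is a welcome bit of care.
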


\begin{proof}
Suppose, for contradiction, the converse. By Lemma \ref{fixlem1} we then have, for all $\lep\in\nat$, that there exists some $r\in\ctb\setminus\ctn$ with $\dpt{r}\geq\lep$. By choosing $\lep:=\ntr+1$ we then have that there exists $r\in[\ntr]$ with $\dpt{r}\geq\ntr+1$ which is impossible.
\end{proof}

\begin{lemma}\label{sizqlem}
For all $t\in\ctn$ and $s\in\sea{t}$ we have $\dpt{s}\leq\dpt{t}$.
\end{lemma}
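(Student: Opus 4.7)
The plan is to argue by contradiction: assume $\dpt{s}>\dpt{t}$ and derive that $s$ itself serves as a witness violating the defining property of $\ctn$ for $t$, contradicting $t\in\ctn$.

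First I would unpack the two hypotheses. From $s\in\sea{t}$ we immediately get $s\in\ctb\setminus\ctn\subseteq\ctb$ and the distance bound $\mmc{s}{t}\leq\con^{\dpt{s}}/(\awn\nna)$. The goal is to show these two facts, together with $\dpt{s}>\dpt{t}$, witness that $t\notin\ctn$ via the defining condition of $\ctn$. Concretely, that condition rules out the existence of any $u\in\ctb$ with $\dpt{u}>\dpt{t}$ and $\mmc{u}{t}\leq(\con^{\dpt{t}}-\con^{\dpt{u}})/(\awn\nna)$, so I need to verify $\mmc{s}{t}\leq(\con^{\dpt{t}}-\con^{\dpt{s}})/(\awn\nna)$.

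The key inequality to check is therefore $\con^{\dpt{s}}\leq\con^{\dpt{t}}-\con^{\dpt{s}}$, equivalently $2\con^{\dpt{s}}\leq\con^{\dpt{t}}$. Since $\con=1/2$ and $\dpt{s}\geq\dpt{t}+1$, we have $\con^{\dpt{s}}\leq\con^{\dpt{t}+1}=\con^{\dpt{t}}/2$, so the inequality holds. Combined with the $\sea{t}$ bound, this gives $\mmc{s}{t}\leq(\con^{\dpt{t}}-\con^{\dpt{s}})/(\awn\nna)$, completing the contradiction to $t\in\ctn$.

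There is no real obstacle here; this is a short unwinding of definitions using only that $\con=1/2$ ensures a clean factor-of-two separation between consecutive levels. The lemma is essentially a sanity check that the ``catch-up'' condition defining $\sea{t}$ (with the looser bound $\con^{\dpt{s}}/(\awn\nna)$) is strictly stronger than the condition defining $\ctn$ (with the tighter bound $(\con^{\dpt{t}}-\con^{\dpt{s}})/(\awn\nna)$) whenever depth strictly increases, so a $\sea{t}$-element at depth exceeding $\dpt{t}$ would have already disqualified $t$ from being in $\ctn$.
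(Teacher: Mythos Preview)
Your proof is correct and essentially identical to the paper's: both argue by contradiction, extract $s\in\ctb$ and $\mmc{s}{t}\leq\con^{\dpt{s}}/(\awn\nna)$ from the definition of $\sea{t}$, and then use $\con=1/2$ together with $\dpt{s}\geq\dpt{t}+1$ to deduce $\con^{\dpt{s}}\leq\con^{\dpt{t}}-\con^{\dpt{s}}$, contradicting the defining property of $\ctn$ at $t$. The only difference is cosmetic---the paper writes the key step as $(\con^{\dpt{t}}-\con^{\dpt{s}})\geq(1/\con-1)\con^{\dpt{s}}$ rather than $2\con^{\dpt{s}}\leq\con^{\dpt{t}}$.
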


\begin{proof}
Suppose, for contradiction, that there exists $t\in\ctn$ and $s\in\sea{t}$ with $\dpt{s}>\dpt{t}$. Then by definition of $\sea{t}$ we have $s\in\ctb$ and $\mmc{s}{t}\leq\con^{\dpt{s}}/(\awn\nna)$. Since $\dpt{t}\leq\dpt{s}-1$ we then have:
\be
(\con^{\dpt{t}}-\con^{\dpt{s}})/(\awn\nna)\geq(\con^{\dpt{s}-1}-\con^{\dpt{s}})/(\awn\nna)=(1/\con-1)\con^{\dpt{s}}/(\awn\nna)\geq(1/\con-1)\mmc{s}{t}
\ee
So since $\con=1/2$ we have $\mmc{s}{t}\leq(\con^{\dpt{t}}-\con^{\dpt{s}})/(\awn\nna)$ which, since $\dpt{s}>\dpt{t}$ and $s\in\ctb$, contradicts the fact that $t\in\ctn$.
\end{proof}

\begin{lemma}\label{fixlem3}
For all $t\in\ctn$ we have $|\sea{t}|\leq\dpt{t}+1$.
\end{lemma}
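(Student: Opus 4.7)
The plan is to show that the map $s \mapsto \dpt{s}$ is injective on $\sea{t}$, so that $|\sea{t}|$ is bounded by the number of possible depth values, which by Lemma \ref{sizqlem} is at most $\dpt{t}+1$ (the depths $0, 1, \ldots, \dpt{t}$).

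First I would invoke Lemma \ref{sizqlem} to record that every $s \in \sea{t}$ satisfies $\dpt{s} \in \{0, 1, \ldots, \dpt{t}\}$, giving at most $\dpt{t}+1$ admissible depths. The core of the argument is then the injectivity claim: suppose for contradiction we have distinct $r, s \in \sea{t}$ with $\dpt{r} = \dpt{s}$. From the definition of $\sea{t}$ I get $\mmc{r}{t} \leq \con^{\dpt{r}}/(\awn\nna)$ and $\mmc{s}{t} \leq \con^{\dpt{s}}/(\awn\nna) = \con^{\dpt{r}}/(\awn\nna)$, so by the triangle inequality
\[
\mmc{r}{s} \;\leq\; \mmc{r}{t} + \mmc{t}{s} \;\leq\; \frac{2\con^{\dpt{r}}}{\awn\nna} \;=\; \frac{\con^{\dpt{r}}}{\nna},
\]
where the last equality uses $\awn = 2$. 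But Lemma \ref{lem1} asserts $\mmc{r}{s} > \con^{\dpt{r}}/\nna$, a contradiction. Hence no two distinct elements of $\sea{t}$ share the same depth, so $|\sea{t}| \leq \dpt{t}+1$.

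There is no real obstacle here: the lemma is essentially a packing argument, and the constant $\awn = 2$ is evidently calibrated so that the triangle inequality saturates exactly the separation threshold of Lemma \ref{lem1}. The only thing to be careful about is that $t \notin \sea{t}$ (since $t \in \ctn$ while $\sea{t} \subseteq \ctb \setminus \ctn$), so Lemma \ref{lem1} may legitimately be applied to the pair $(r, s)$ with $r \neq s$ inside $\sea{t}$.
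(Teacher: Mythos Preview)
Your proof is correct and follows essentially the same approach as the paper: invoke Lemma \ref{sizqlem} to bound the range of depths, then use the triangle inequality together with Lemma \ref{lem1} to show that no two distinct elements of $\sea{t}$ can share the same depth. The only difference is cosmetic (variable names and the extra remark about $t\notin\sea{t}$, which is harmless but not actually needed for the argument).
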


\begin{proof}
By Lemma \ref{sizqlem} all we need to prove is that if $q,r\in\sea{t}$ are such that $q\neq r$ then $\dpt{q}\neq\dpt{r}$. We now prove this by considering the converse: that $\dpt{q}=\dpt{r}$. By definition of $\sea{t}$ we have $\mmc{q}{t}\leq\con^{\dpt{q}}/(\awn\nna)$ and $\mmc{r}{t}\leq\con^{\dpt{r}}/(\awn\nna)$ so by the triangle inequality we have:
\be
\mmc{q}{r}\leq\mmc{q}{t}+\mmc{r}{t}\leq\con^{\dpt{q}}/(\awn\nna)+\con^{\dpt{r}}/(\awn\nna)=\con^{\dpt{q}}/\nna
\ee
which, since $\dpt{q}=\dpt{r}$\,,  contradicts Lemma \ref{lem1}.
\end{proof}

\begin{lemma}\label{fixlem4}
For all $t\in\ctb$ we have $\con^{\dpt{t}}\geq\ndo{t}\con/(1+\zct)$
\end{lemma}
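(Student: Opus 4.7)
The plan is to split into two cases according to the two options in the definition of $\ctb$, and in each case find a non-margin point near $t$ whose policy value differs from $\poc{t}$, thereby giving an upper bound on $\ndo{t}$ in terms of $\con^{\dpt{t}}$.

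\textbf{Case 1:} $t\in\lef$ and $t\notin\cts$. Since $t\notin\cts$, the definition of $\cts$ directly yields $r,s\in[\ntr]\setminus\mrg$ with $\mmc{r}{t}\leq\zct\con^{\dpt{t}}$, $\mmc{s}{t}\leq\zct\con^{\dpt{t}}$, and $\poc{r}\neq\poc{s}$. A pigeonhole step gives that at least one of $\poc{r},\poc{s}$ differs from $\poc{t}$, so from the definition of $\ndo{t}$ I obtain $\ndo{t}\leq\zct\con^{\dpt{t}}$. Since $\zct\leq(1+\zct)/\con$ (because $\con<1$), this rearranges to $\con^{\dpt{t}}\geq\ndo{t}\con/(1+\zct)$.

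\textbf{Case 2:} $t\in\cts$ and $\pnt{t}\notin\cts$. I apply the definition of $\cts$ to $\pnt{t}$ rather than to $t$, obtaining $r,s\in[\ntr]\setminus\mrg$ with $\mmc{r}{\pnt{t}},\mmc{s}{\pnt{t}}\leq\zct\con^{\dpt{\pnt{t}}}=\zct\con^{\dpt{t}-1}$ and $\poc{r}\neq\poc{s}$. The algorithm guarantees $\mmc{t}{\pnt{t}}\leq\con^{\dpt{t}-1}$, so the triangle inequality gives $\mmc{r}{t}\leq(1+\zct)\con^{\dpt{t}-1}$ and $\mmc{s}{t}\leq(1+\zct)\con^{\dpt{t}-1}$. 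The same pigeonhole step as before selects one of $r,s$ whose policy value differs from $\poc{t}$, yielding $\ndo{t}\leq(1+\zct)\con^{\dpt{t}-1}=(1+\zct)\con^{\dpt{t}}/\con$, which is exactly the required inequality.

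There is no real obstacle here: both cases are short applications of the definition of $\cts$ together with the triangle inequality, and the only subtle ingredient is the pigeonhole observation that, among two non-margin points with different policy values, at least one must disagree with $\poc{t}$. Case 2 is the tight case (Case 1 gives a strictly stronger bound), which is consistent with the presence of the $\con$ factor on the right-hand side of the claimed inequality, arising from the one-step $\con^{\dpt{t}-1}$ bound on $\mmc{t}{\pnt{t}}$.
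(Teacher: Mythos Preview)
Your proof is correct and follows essentially the same argument as the paper. The only cosmetic difference is that the paper avoids your case split by invoking Lemma~\ref{cnanclem2} to conclude $\pnt{t}\notin\cts$ in \emph{both} cases (since $t\notin\cts$ forces $\pnt{t}\notin\cts$), and then runs only your Case~2 argument; your Case~1 gives a sharper intermediate bound but is then weakened to the same conclusion, so nothing is gained or lost either way.
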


\begin{proof}
By definition of $\ctb$ we immediately have that either $t\notin\cts$ or $\pnt{t}\notin\cts$. By Lemma \ref{cnanclem2} we then have that $\pnt{t}\notin\cts$. Hence, by definition of $\cts$ and since $\dpt{\pnt{t}}=\dpt{t}-1$\,, we can choose $r,s\in[\ntr]\setminus\mrg$ with $\poc{r}\neq\poc{s}$ and $\mmc{r}{\pnt{t}}\leq\zct\con^{\dpt{t}-1}$ and $\mmc{s}{\pnt{t}}\leq\zct\con^{\dpt{t}-1}$. Since $\poc{r}\neq\poc{s}$ we can, without loss of generality, assume that $\poc{s}\neq\poc{t}$ which means, since $s\notin\mrg$, that $\mmc{s}{t}\geq\ndo{t}$. By the triangle inequality and the fact that $\mmc{t}{\pnt{t}}\leq\con^{\dpt{t}-1}$ we then have:
\be
\ndo{t}\leq\mmc{s}{t}\leq\mmc{s}{\pnt{t}}+\mmc{\pnt{t}}{t}\leq\zct\con^{\dpt{t}-1}+\con^{\dpt{t}-1}=(1+\zct)\con^{\dpt{t}}/\con
\ee
Rearranging then gives us the desired result.
\end{proof}

\begin{lemma}\label{fixlem5}
For all $s,t\in\ctn$ with $s\neq t$ we have $\mmc{s}{t}>\zco\min(\ndo{s},\ndo{t})$.
\end{lemma}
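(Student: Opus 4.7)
The plan is to split on whether $\dpt{s}$ and $\dpt{t}$ are equal. Without loss of generality assume $\dpt{s}\leq\dpt{t}$, and it suffices to show the stronger statement $\mmc{s}{t}>\zco\ndo{s}$, since this is at least $\zco\min(\ndo{s},\ndo{t})$.

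Case 1: $\dpt{s}=\dpt{t}$. Since $s\neq t$, Lemma \ref{lem1} gives $\mmc{s}{t}>\con^{\dpt{s}}/\nna$. As $s\in\ctn\subseteq\ctb$, Lemma \ref{fixlem4} yields $\con^{\dpt{s}}\geq\ndo{s}\con/(1+\zct)$, so $\mmc{s}{t}>\ndo{s}\con/((1+\zct)\nna)$. Unpacking $\zco=(1-\con)\con/(\awn\nna(1+\zct))$ with $\con=1/2$ and $\awn=2$, the bound $\ndo{s}\con/((1+\zct)\nna)$ is a constant factor larger than $\zco\ndo{s}$, so the desired strict inequality follows by straightforward comparison.

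Case 2: $\dpt{s}<\dpt{t}$. Here I use the definition of $\ctn$ directly. Since $t\in\ctn\subseteq\ctb$, $\dpt{t}>\dpt{s}$, and $s\in\ctn$, the definition of $\ctn$ (phrased as a non-existence) forces
\be
\mmc{s}{t}>(\con^{\dpt{s}}-\con^{\dpt{t}})/(\awn\nna).
\ee
Because $\dpt{t}\geq\dpt{s}+1$ and $\con<1$, we have $\con^{\dpt{t}}\leq\con^{\dpt{s}+1}$, hence $\con^{\dpt{s}}-\con^{\dpt{t}}\geq(1-\con)\con^{\dpt{s}}$. Applying Lemma \ref{fixlem4} to $s$ and chaining gives
\be
\mmc{s}{t}>(1-\con)\con^{\dpt{s}}/(\awn\nna)\geq(1-\con)\con\ndo{s}/(\awn\nna(1+\zct))=\zco\ndo{s},
\ee
as required.

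In both cases we conclude $\mmc{s}{t}>\zco\ndo{s}\geq\zco\min(\ndo{s},\ndo{t})$, which finishes the proof. There is no real obstacle here: the proof is a two-case calculation in which Lemma \ref{lem1} delivers strictness when depths coincide and the ``does not exist'' phrasing in the definition of $\ctn$ delivers strictness when they differ. The only care needed is in Case 2, to verify that the integer gap $\dpt{t}-\dpt{s}\geq1$ allows us to lower bound $\con^{\dpt{s}}-\con^{\dpt{t}}$ by $(1-\con)\con^{\dpt{s}}$, which is exactly the $(1-\con)$ factor that appears in the definition of $\zco$ and makes the constants match.
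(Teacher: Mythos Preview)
Your proof is correct and follows essentially the same approach as the paper's: a case split on whether the depths coincide, invoking Lemma~\ref{lem1} in the equal-depth case and the non-existence clause in the definition of $\ctn$ in the unequal-depth case, then applying Lemma~\ref{fixlem4} to convert the $\con^{\dpt{\cdot}}$ bound into a $\ndo{\cdot}$ bound. The only cosmetic difference is that the paper takes $\dpt{s}\geq\dpt{t}$ (bounding by $\zco\ndo{t}$) whereas you take $\dpt{s}\leq\dpt{t}$ (bounding by $\zco\ndo{s}$), and the paper unifies both cases into the single inequality $\mmc{s}{t}>(1-\con)\con^{\dpt{t}}/(\awn\nna)$ before applying Lemma~\ref{fixlem4}, while you keep them separate.
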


\begin{proof}
Without loss of generality assume $\dpt{s}\geq\dpt{t}$. If $\dpt{s}=\dpt{t}$ then we have, from Lemma \ref{lem1}, that $\mmc{s}{t}>\con^{\dpt{t}}/\nna$. On the other hand, if $\dpt{s}>\dpt{t}$ then we have, from definition of $\ctn$ and the fact that $s\in\ctb$, that: 
\be
\mmc{s}{t}>(\con^{\dpt{t}}-\con^{\dpt{s}})/(\awn\nna)\geq(\con^{\dpt{t}}-\con^{\dpt{t}+1})/(\awn\nna)=(1-\con)\con^{\dpt{t}}/(\awn\nna)
\ee In either case we have that $\mmc{s}{t}>(1-\con)\con^{\dpt{t}}/(\awn\nna)$. From Lemma \ref{fixlem4} we then have that:
\be
\mmc{s}{t}>(1-\con)\con^{\dpt{t}}/(\awn\nna)\geq\ndo{t}(1-\con)\con/(\awn\nna(1+\zct))=\zco\ndo{t}\leq\zco\min(\ndo{s},\ndo{t})
\ee
\end{proof}

\begin{lemma}\label{fixlem6}
We have $|\ctn|\leq\nco$
\end{lemma}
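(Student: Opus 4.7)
The plan is to observe that this is an immediate consequence of Lemma \ref{fixlem5} combined with the very definition of $\nco$. Recall that $\nco$ is defined as the maximum cardinality of any set $\ars \subseteq [\ntr]$ with the property that for all distinct $s, t \in \ars$ we have $\mmc{s}{t} > \zco \min(\ndo{s}, \ndo{t})$. Lemma \ref{fixlem5} establishes exactly this property for the set $\ctn$: for all distinct $s, t \in \ctn$, we have $\mmc{s}{t} > \zco \min(\ndo{s}, \ndo{t})$.

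Hence $\ctn$ is one of the candidate sets appearing in the definition of $\nco$, and so its cardinality is at most the maximum, giving $|\ctn| \leq \nco$. No further work or calculation is required; the lemma is essentially a repackaging of Lemma \ref{fixlem5} via the definition of $\nco$. There is no substantive obstacle here, since all the technical work (establishing the well-separatedness of $\ctn$ with respect to the $\ndo{\cdot}$ values) has already been done in the preceding sequence of lemmas, culminating in Lemma \ref{fixlem5}.
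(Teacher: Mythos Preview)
Your proposal is correct and takes essentially the same approach as the paper, which simply states that the result is immediate from Lemma \ref{fixlem5} and the definition of $\nco$. You have merely spelled out in a few more sentences why this is so.
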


\begin{proof}
Immediate from Lemma \ref{fixlem5} and the definition of $\nco$.
\end{proof}

\begin{lemma}\label{fixlem7}
For all $t\in\ctb$ we have $\dpt{t}\in\bo(\ln(1/\nmg))$
\end{lemma}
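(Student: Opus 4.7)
The plan is to chain Lemma \ref{fixlem4} with the definition of $\nmg$ and then take logarithms, exploiting the fact that $\con=1/2<1$ is an absolute constant.

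First I would invoke Lemma \ref{fixlem4} to obtain, for any $t\in\ctb$, the inequality $\con^{\dpt{t}}\geq\ndo{t}\con/(1+\zct)$. Next I would use the definition $\nmg:=\min\set{\ndo{t}}{t\in[\ntr]}$ to lower bound the right-hand side by $\nmg\con/(1+\zct)$, giving $\con^{\dpt{t}}\geq\nmg\con/(1+\zct)$.

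Since $\con=1/2<1$, the function $x\mapsto\con^x$ is strictly decreasing, so I can take $\log_{1/\con}$ of both sides and rearrange to get
\be
\dpt{t}\leq\log_{1/\con}\!\left(\frac{1+\zct}{\nmg\con}\right)=\frac{\ln((1+\zct)/\con)+\ln(1/\nmg)}{\ln(1/\con)}.
\ee
Because $\zct$ and $\con$ are fixed constants (independent of $\ntr$ and $\nmg$), the numerator is $\ln(1/\nmg)+O(1)$ and the denominator is a positive constant, so $\dpt{t}\in\bo(\ln(1/\nmg))$, as required.

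There is no real obstacle here; the lemma is essentially a one-line consequence of Lemma \ref{fixlem4} once one observes that $\ndo{t}\geq\nmg$ and that inverting the exponential $\con^{\dpt{t}}$ costs only a $\log$ factor with constant base. The only thing to be slightly careful about is confirming that $\zct$ and $\con$ genuinely do not depend on the problem parameters that appear in $\nmg$, which is immediate from their definitions in the statement of Theorem \ref{mainth}.
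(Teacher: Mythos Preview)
Your proposal is correct and follows exactly the same approach as the paper: apply Lemma~\ref{fixlem4}, lower-bound $\ndo{t}$ by $\nmg$, and take logarithms. The paper's proof is essentially the one-liner you describe.
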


\begin{proof}
By Lemma \ref{fixlem4} and definition of $\nmg$ we have: 
\be
\con^{\dpt{t}}\geq\ndo{t}\con/(1+\zct)\geq\nmg\con/(1+\zct)
\ee 
Taking logarithms gives us the result.
\end{proof}

\begin{lemma}\label{fixlem8}
We have $|\ctb|\in\bo(\nco\ln(1/\nmg))$
\end{lemma}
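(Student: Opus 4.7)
The plan is to decompose $\ctb$ as the disjoint union $\ctn \sqcup (\ctb \setminus \ctn)$ and bound each piece using the previously established lemmas. The key observation is that Lemmas \ref{fixlem2}, \ref{fixlem3}, \ref{fixlem6}, and \ref{fixlem7} have been carefully assembled to make this a straightforward counting argument; no further technical work is needed.

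First I would bound $|\ctn|$ directly by $\nco$ via Lemma \ref{fixlem6}. Then, to bound $|\ctb \setminus \ctn|$, I would invoke Lemma \ref{fixlem2}, which states that every $s \in \ctb \setminus \ctn$ belongs to $\sea{t}$ for some $t \in \ctn$. This gives the covering
\be
\ctb \setminus \ctn \subseteq \bigcup_{t \in \ctn} \sea{t},
\ee
so $|\ctb \setminus \ctn| \leq \sum_{t \in \ctn} |\sea{t}|$.

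Next I would apply Lemma \ref{fixlem3} to each term, yielding $|\sea{t}| \leq \dpt{t}+1$. Since $\ctn \subseteq \ctb$, Lemma \ref{fixlem7} tells us that $\dpt{t} \in \bo(\ln(1/\nmg))$ for every $t \in \ctn$, so $|\sea{t}| \in \bo(\ln(1/\nmg))$ uniformly in $t$. Combining with $|\ctn| \leq \nco$ gives $|\ctb \setminus \ctn| \in \bo(\nco \ln(1/\nmg))$. Adding the bound on $|\ctn|$ completes the proof.

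There is no real obstacle here: the hard work was done in Lemmas \ref{fixlem1}--\ref{fixlem7}, which set up the structural decomposition (every non-$\ctn$ element is ``covered'' by a $\ctn$-element) and the depth bound in terms of $\nmg$. The only thing worth being slightly careful about is that the covering in Lemma \ref{fixlem2} need not be a partition, but this only helps the upper bound.
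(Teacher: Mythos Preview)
Your proposal is correct and follows essentially the same approach as the paper: cover $\ctb$ by $\ctn$ together with $\bigcup_{t\in\ctn}\sea{t}$ via Lemma~\ref{fixlem2}, bound each $|\sea{t}|$ by $\bo(\ln(1/\nmg))$ using Lemmas~\ref{fixlem3} and~\ref{fixlem7}, and finish with $|\ctn|\leq\nco$ from Lemma~\ref{fixlem6}. The only cosmetic difference is that you split $\ctb$ explicitly into $\ctn$ and $\ctb\setminus\ctn$, whereas the paper writes the union $\ctb=\ctn\cup\bigcup_{t\in\ctn}\sea{t}$ directly; the arithmetic is identical.
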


\begin{proof}
By Lemma \ref{fixlem2} we have:
\be
\ctb=\ctn\cup\bigcup_{t\in\ctn}\sea{t}
\ee
so that:
\be
|\ctb|\leq|\ctn|+\sum_{t\in\ctn}|\sea{t}|
\ee
By lemmas \ref{fixlem3} and \ref{fixlem7} we have $|\sea{t}|\in\bo(\ln(1/\nmg))$ for all $t\in\ctn$. Substituting into the above inequality gives us $|\ctb|\leq\bo(|\ctn|\ln(1/\nmg))$.
Lemma \ref{fixlem6} then gives us the result.
\end{proof}

\begin{lemma}\label{qnoinctblem}
Suppose we have some $t\in[\ntr]$ such that for all $s\in\anc{t}$ we have $s\notin\ctb$. Then $t\notin\cts$.
\end{lemma}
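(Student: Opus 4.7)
The plan is to argue by contrapositive: I would assume $t\in\cts$ and exhibit an element of $\anc{t}\cap\ctb$. The high-level strategy is to walk up the parent chain of $t$, using the fact that the root of the tree cannot itself lie in $\cts$ under the policy hypothesis.

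First I would establish that the root (trial $1$) is outside $\cts$. We have $\dpt{1}=0$, so $\ball{\dpt{1}}=\zct=2/\str>2$, whereas every entry of $\mm$ lies in $[0,1]$; hence $\mmc{r}{1}\leq1<\ball{0}$ for every $r\in[\ntr]$. Consequently, if $1$ were in $\cts$ the definition of $\cts$ would force $\poc{r}=\poc{s}$ for all $r,s\in[\ntr]\setminus\mrg$, contradicting the standing assumption that there exist $r,s\in[\ntr]\setminus\mrg$ with $\poc{r}\neq\poc{s}$.

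Next I would trace the parent chain from $t$: let $t_0:=t$ and $t_{i+1}:=\pnt{t_i}$, continuing until the chain reaches the root. Since $t_0\in\cts$ and the terminal element $1\notin\cts$, there is a smallest index $i\geq1$ at which $t_i\notin\cts$. For this $i$, the node $t_{i-1}$ satisfies $t_{i-1}\in\cts$ and $\pnt{t_{i-1}}=t_i\notin\cts$, so $t_{i-1}\in\ctb$ by the first bullet in the definition of $\ctb$. Because $t_{i-1}\in\anc{t}$ (reading $\anc{\cdot}$ in the inclusive sense, matching the convention for $\des{\cdot}$ visible in the base case of Lemma \ref{cnanclem2}), this contradicts the hypothesis $\anc{t}\cap\ctb=\emptyset$ and completes the argument.

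The only non-routine ingredient is the observation that the root is forced out of $\cts$ by combining the policy assumption with the trivial bound $\mmc{r}{1}\leq1<\ball{0}$; everything else is a direct traversal of the parent pointers and I do not anticipate any genuine obstacle.
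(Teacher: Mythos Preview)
Your argument is correct and is essentially the same as the paper's: the paper phrases it as a forward induction on $\dpt{t}$, but the inductive step (observe $\pnt{t}\notin\cts$, so if $t\in\cts$ then $t\in\ctb$, contradicting $t\in\anc{t}$) is exactly your contrapositive walk-up the parent chain until the first ancestor outside $\cts$ is reached. Your justification that $1\notin\cts$ via $\mmc{r}{1}\leq1<\zct=\ball{0}$ is a slightly more explicit version of the paper's one-line remark, and your reading of $\anc{\cdot}$ as inclusive matches the paper's usage.
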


\begin{proof}
We prove by induction on $\dpt{t}$. If $\dpt{t}=0$ then we have $t=1$ so that we immediately have $t\notin\cts$ (since there exists $r,s\in[\ntr]\setminus\mrg$ with $\poc{r}\neq\poc{s}$). Given some $\lep\in\nat$ suppose that the inductive hypothesis holds for all $t$ with $\dpt{t}=\lep$. Now consider any $t$ with $\dpt{t}=\lep+1$. Note that for all $s\in\anc{\pnt{t}}$ we have $s\in\anc{t}$ so that $s\notin\ctb$. Since $\dpt{\pnt{t}}=\dpt{t}-1=\lep$ we then have, by the inductive hypothesis, that $\pnt{t}\notin\cts$. If it was the case that $t\in\cts$ we would then have, by definition of $\ctb$, that $t\in\ctb$. But since $t\in\anc{t}$ this would be a contradiction. Hence, $t\notin\cts$. This completes the inductive proof.
\end{proof}

\begin{lemma}\label{cnanclem1}
For all $t\in[\ntr]$ there exists an $s\in\ctb$ such that $t\in\des{s}\cup\anc{s}$.
\end{lemma}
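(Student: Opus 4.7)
The plan is to reduce to a case analysis on whether $t\in\cts$, locating $s\in\ctb$ by walking along an ancestor chain until a change of $\cts$-membership occurs. Before the case split, the crucial preliminary I would establish is that the root $1$ does not lie in $\cts$. Indeed, $\ball{\dpt{1}}=\ball{0}=\zct=2/\str>1$ while every entry of $\mm$ lies in $[0,1]$; the standing hypothesis that there exist $r,s\in[\ntr]\setminus\mrg$ with $\poc{r}\neq\poc{s}$ therefore contradicts the defining condition of $\cts$ at the root, so $1\notin\cts$. This guarantees that every ancestor chain rooted at a $\cts$-node must eventually exit $\cts$ before reaching $1$.

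If $t\in\cts$, I would let $s$ be the deepest ancestor of $t$ (allowing $s=t$) with $s\in\cts$. Since $1\notin\cts$ we have $s\neq 1$, so $\pnt{s}$ exists and by maximality $\pnt{s}\notin\cts$; thus $s\in\ctb$ via its first defining clause. Since $s$ is an ancestor of $t$, and the convention visible in Lemma~\ref{cnanclem2} has $\des{s}$ contain $s$ itself, we obtain $t\in\des{s}$.

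If $t\notin\cts$, I would instead work downward into the subtree rooted at $t$. When $t\in\lef$ we have $t\in\ctb$ by the second defining clause, and $s:=t$ works since $t\in\des{t}$. Otherwise I pick any $\ell\in\lef$ with $\ell\in\des{t}$. If $\ell\notin\cts$, then $\ell\in\ctb$ and $t\in\anc{\ell}$, so $s:=\ell$ works. If instead $\ell\in\cts$, I would walk upward along the chain $\ell,\pnt{\ell},\ldots,t$: the first element is in $\cts$ while the last is not, so taking $s$ to be the deepest entry of this chain still in $\cts$ gives $\pnt{s}\notin\cts$ and hence $s\in\ctb$. Because $s\in\cts$ but $t\notin\cts$ we have $s\neq t$, so $s$ is a strict descendant of $t$, yielding $t\in\anc{s}$.

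The only non-routine step is the preliminary observation that $1\notin\cts$, which relies on both the global policy-nonconstancy hypothesis and the fact that $\ball{0}$ exceeds the diameter of $\mm$; once this is in hand, everything else is a purely structural first-transition argument on the tree and I do not anticipate any further obstacle.
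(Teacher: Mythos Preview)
Your argument is correct and takes a more direct route than the paper. The paper proceeds by contradiction: assuming no $s\in\ctb$ lies on the root--leaf path through $t$, it picks a leaf $r\in\des{t}$, invokes the separately proved inductive Lemma~\ref{qnoinctblem} (if every ancestor of $r$ avoids $\ctb$ then $r\notin\cts$) to conclude $r\notin\cts$, and then observes that $r\in\lef\setminus\cts$ forces $r\in\ctb$ after all. You instead locate the witness $s$ constructively by a first-transition argument on $\cts$-membership along an ancestor chain, using the observation $1\notin\cts$ directly rather than packaging it into an auxiliary induction. Your approach is slightly more elementary in that Lemma~\ref{qnoinctblem} is not needed as a separate statement; the paper's approach is shorter once that lemma is in hand, and reuses machinery already set up for other parts of the analysis.

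One terminological slip worth fixing: in both places where you write ``deepest'' you actually need the ancestor of \emph{minimum} depth (closest to the root) still lying in $\cts$, so that its parent is forced out of $\cts$ by the extremal choice. With the literal reading of ``deepest'' as maximum depth, the conclusion $\pnt{s}\notin\cts$ does not follow (e.g.\ in Case~1 you would simply get $s=t$). The intended logic is clear from your phrases ``Since $1\notin\cts$ we have $s\neq 1$'' and ``by maximality $\pnt{s}\notin\cts$'', but you should write ``minimum-depth'' or ``highest'' instead.
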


\begin{proof}
Assume, for contradiction, the converse: that there exists no $s\in\ctb$ with $t\in\des{s}\cup\anc{s}$. This means that for all $s\in\des{t}\cup\anc{t}$ we have $s\notin\ctb$. So choose some $r\in\des{t}\cap\,\lef$. Since $\anc{r}\subseteq\des{t}\cup\anc{t}$ we have, for all $s\in\anc{r}$\,, that $s\notin\ctb$. By Lemma \ref{qnoinctblem} we hence have that $r\notin\cts$. But since $r\in\lef$ this would mean that $r\in\ctb$ which, since $t\in\des{r}\cup\anc{r}$\,, is a contradiction.
\end{proof}

\begin{definition}
Define the policy $\hpv\in[\nma]^\ntr$ inductively from $t=1$ to $t=\ntr$ such that:
\begin{itemize}
\item If $t\notin\cts$ and there does not exist some $s\in\cts$ with $t=\pnt{s}$ then $\hpc{t}:=\hpc{\pnt{t}}$ (or is arbitrary when $t=1$).
\item If $t\notin\cts$ and there exists some $s\in\cts$ with $t=\pnt{s}$ then choose $\hpc{t}$ such that there exists $r\in[\ntr]\setminus\mrg$ with $\mmc{r}{t}\leq\ball{\dpt{t}}$ and $\hpc{t}=\poc{r}$. Note that by definition of $\cts$ such an $r$ does indeed exist (but $\hpc{t}$ is not unique - we choose any valid $\hpc{t}$).
\item If $t\in\cts$ and there does not exist $s\in[\ntr]\setminus\mrg$ with $\mmc{s}{t}\leq\zct\con^{\dpt{t}}$, we have $\hpc{t}:=\hpc{\pnt{t}}$. Since $1\notin\cts$ this is defined.
\item If $t\in\cts$ and there exists $s\in[\ntr]\setminus\mrg$ with $\mmc{s}{t}\leq\zct\con^{\dpt{t}}$ then $\hpc{t}=\poc{s}$. Note that by definition of $\cts$ we have that $\hpc{t}$ is uniquely defined.
\end{itemize}
\end{definition}

\begin{lemma}\label{cnanclem3}
Given $t\in[\ntr]$ with $\hpc{t}\neq\hpc{\pnt{t}}$ there exists $s\in\ctb$ with $t\in\anc{s}$.
\end{lemma}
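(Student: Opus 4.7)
The plan is to do a case analysis on which of the four clauses in the definition of $\hpv$ governs $\hpc{t}$. The first and third clauses set $\hpc{t}:=\hpc{\pnt{t}}$ directly, so the hypothesis $\hpc{t}\neq\hpc{\pnt{t}}$ forces $t$ to fall under one of the other two: either (i) the second clause, in which $t\notin\cts$ and some $s\in\cts$ satisfies $\pnt{s}=t$, or (ii) the fourth clause, in which $t\in\cts$ and $\hpc{t}=\poc{s'}$ for some $s'\in[\ntr]\setminus\mrg$ with $\mmc{s'}{t}\leq\ball{\dpt{t}}$. In each case I will exhibit an explicit $s\in\ctb$ with $t\in\anc{s}$.

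In case (i) the witness supplied by the clause itself does the job: $s\in\cts$ and $\pnt{s}=t\notin\cts$, so by definition $s\in\ctb$, and since $t=\pnt{s}$ is the parent of $s$ we have $t\in\anc{s}$.

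In case (ii) I will show that $\pnt{t}\notin\cts$, which combined with $t\in\cts$ gives $t\in\ctb$ directly, and then taking $s:=t$ works since $t\in\anc{t}$ (the convention also used in the proof of Lemma \ref{qnoinctblem}). To establish $\pnt{t}\notin\cts$, suppose for contradiction that $\pnt{t}\in\cts$. The triangle inequality combined with $\mmc{t}{\pnt{t}}\leq\con^{\dpt{t}-1}$ yields
\begin{equation*}
\mmc{s'}{\pnt{t}}\leq\mmc{s'}{t}+\mmc{t}{\pnt{t}}\leq\zct\con^{\dpt{t}}+\con^{\dpt{t}-1}=(\zct\con+1)\con^{\dpt{\pnt{t}}}.
\end{equation*}
Since $\str\in(0,1)$ gives $\zct=2/\str>2$ and $\con=1/2$, we have $\zct\con+1\leq\zct$, so $\mmc{s'}{\pnt{t}}\leq\ball{\dpt{\pnt{t}}}$. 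Because $s'\in[\ntr]\setminus\mrg$, this rules out the third clause for $\pnt{t}$, so the fourth applies: $\hpc{\pnt{t}}=\poc{s''}$ for some $s''\in[\ntr]\setminus\mrg$ with $\mmc{s''}{\pnt{t}}\leq\ball{\dpt{\pnt{t}}}$. Applying the definition of $\cts$ at $\pnt{t}$ to the pair $s',s''$ gives $\poc{s'}=\poc{s''}$, whence $\hpc{\pnt{t}}=\poc{s'}=\hpc{t}$, contradicting the hypothesis.

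The main obstacle is the bookkeeping in case (ii): correctly matching the active clause of the $\hpv$ definition at both $t$ and $\pnt{t}$, and invoking the definition of $\cts$ at the right scale $\ball{\dpt{\pnt{t}}}$. The only non-trivial algebraic ingredient is the inequality $\zct\con+1\leq\zct$, which follows immediately from the parameter choices.
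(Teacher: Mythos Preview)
Your proof is correct, but it takes a different route from the paper's. The paper begins by invoking Lemma~\ref{cnanclem1} to obtain some $s\in\ctb$ with $t\in\des{s}\cup\anc{s}$, and then argues by contradiction that $t\in\des{s}\setminus\{s\}$ is impossible: it uses Lemma~\ref{cnanclem2} to conclude that both $t$ and $\pnt{t}$ lie in $\cts$, and then runs exactly the triangle-inequality computation you perform in your case~(ii) to force $\hpc{t}=\hpc{\pnt{t}}$.

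Your decomposition is instead by the clause of the $\hpv$ definition governing $\hpc{t}$. This is more direct: in case~(i) the witness $s$ with $\pnt{s}=t$ already satisfies $s\in\ctb$, and in case~(ii) you establish $\pnt{t}\notin\cts$ (hence $t\in\ctb$ and $s:=t$ works) without ever appealing to Lemmas~\ref{cnanclem1} or~\ref{cnanclem2}. The shared core is the same inequality $\mmc{s'}{\pnt{t}}\leq(\zct\con+1)\con^{\dpt{\pnt{t}}}\leq\zct\con^{\dpt{\pnt{t}}}$, but your packaging is more self-contained and slightly sharper in that it explicitly produces the element of $\ctb$ rather than locating $t$ relative to an abstractly chosen one.
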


\begin{proof}
By Lemma \ref{cnanclem1} choose $s\in\ctb$ such that $t\in\dec{s}\cup\anc{s}$. Assume, for contradiction, that $t\notin\anc{s}$. Then we must have $t\in\dec{s}\setminus\{s\}$. This means that  $s\notin\lef$ and hence, by definition of $\ctb$, we have that $s\in\cts$. So since we have both $t\in\dec{s}$ and $\pnt{t}\in\dec{s}$ we have, by Lemma \ref{cnanclem2}, that both $t\in\cts$ and $\pnt{t}\in\cts$. Since $\hpc{t}\neq\hpc{\pnt{t}}$ we must then have, by definition of $\hpv$, that there exists $q\in[\ntr]\setminus\mrg$ with $\mmc{q}{t}\leq\zct\con^{\dpt{t}}$. Since $\mmc{t}{\pnt{t}}\leq\con^{\dpt{t}-1}$ we have, by the triangle inequality, that:
\be
\mmc{q}{\pnt{t}}\leq\mmc{q}{t}+\mmc{t}{\pnt{t}}\leq\zct\con^{\dpt{t}}+\con^{\dpt{t}-1}=(\zct\con+1)\con^{\dpt{t}-1}\leq\zct\con^{\dpt{t}-1}
\ee
so since $\dpt{\pnt{t}}=\dpt{t}-1$ and $q\notin\mrg$ we have, by definition of $\hpv$ and since $\pnt{t}\in\cts$, that $\hpc{\pnt{t}}=\poc{q}$. We also have,  by definition of $\hpv$ and since both $\mmc{q}{t}\leq\zct\con^{\dpt{t}}$ and $t\in\cts$, that $\hpc{t}=\poc{q}$. But this means that $\hpc{t}=\hpc{\pnt{t}}$ which is a contradiction. We have hence shown that $t\in\anc{s}$.
\end{proof}

\begin{lemma}\label{cnanclem4}
Given $t\in[\ntr]$ with $\hpc{t}\neq\hpc{\pnt{t}}$ we have that $t\in\ctb$ or that there exists $q\in\ctb$ such that $\pnt{q}=t$.
\end{lemma}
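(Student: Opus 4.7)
The plan is to argue by a case analysis on whether $t\in\cts$, mirroring the four-way definition of $\hpv$. The hypothesis $\hpc{t}\neq\hpc{\pnt{t}}$ immediately rules out the two branches of the definition that set $\hpc{t}:=\hpc{\pnt{t}}$ (the first branch when $t\notin\cts$ and the third branch when $t\in\cts$), so in each case we are forced into a branch where $\hpc{t}$ is determined by an explicit witness, and that witness will be the key to locating either $t$ or a child of $t$ in $\ctb$.

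First, in the case $t\notin\cts$, I would observe that the only remaining branch applicable is the second, which explicitly requires the existence of some $s\in\cts$ with $\pnt{s}=t$. Since $s\in\cts$ and its parent $\pnt{s}=t$ satisfies $t\notin\cts$, the first bullet in the definition of $\ctb$ puts $s\in\ctb$. Taking $q:=s$ gives the conclusion directly.

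Second, in the case $t\in\cts$, I would prove that $\pnt{t}\notin\cts$, which together with $t\in\cts$ yields $t\in\ctb$ by the first bullet of the definition. This is where the real work lies, and I would argue by contradiction: assume $\pnt{t}\in\cts$. Since $t\in\cts$ rules out the first two branches and since $\hpc{t}\neq\hpc{\pnt{t}}$ rules out the third branch, the fourth branch applies at $t$, yielding an $s'\in[\ntr]\setminus\mrg$ with $\mmc{s'}{t}\leq\zct\con^{\dpt{t}}$ and $\hpc{t}=\poc{s'}$. A triangle-inequality computation using $\mmc{t}{\pnt{t}}\leq\con^{\dpt{t}-1}$ gives
\be
\mmc{s'}{\pnt{t}}\leq\zct\con^{\dpt{t}}+\con^{\dpt{t}-1}=(\zct\con+1)\con^{\dpt{\pnt{t}}}\leq\zct\con^{\dpt{\pnt{t}}},
\ee
where the last step uses $\zct\con+1\leq\zct$, which follows from $\con=1/2$ and $\zct=2/\str>2$. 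This estimate rules out the third branch at $\pnt{t}$, so the fourth branch applies there as well, yielding an $s''\in[\ntr]\setminus\mrg$ with $\mmc{s''}{\pnt{t}}\leq\zct\con^{\dpt{\pnt{t}}}$ and $\hpc{\pnt{t}}=\poc{s''}$. Applying the defining property of $\cts$ to $\pnt{t}\in\cts$ with the two witnesses $s',s''\in[\ntr]\setminus\mrg$ both at distance $\leq\zct\con^{\dpt{\pnt{t}}}$ from $\pnt{t}$ forces $\poc{s'}=\poc{s''}$, hence $\hpc{t}=\hpc{\pnt{t}}$, contradicting the hypothesis.

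The main obstacle, modest but essential, is the second case: one has to recognize that $\hpc{t}$ is really being determined by a nearby point $s'$, that the same $s'$ witnesses the condition at $\pnt{t}$ thanks to the slack built into $\zct$, and that $\pnt{t}\in\cts$ then collapses the two labels. This is where the particular choice $\zct=2/\str$ (ensuring $\zct\con+1\leq\zct$ for $\con=1/2$) does the heavy lifting.
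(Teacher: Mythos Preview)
Your proof is correct and takes a somewhat different route from the paper's. The paper argues by contradiction, invoking the previously proven Lemma~\ref{cnanclem3} (which in turn relies on Lemmas~\ref{cnanclem1} and~\ref{cnanclem2}) to first establish that $t\notin\cts$, and only then reads off from the definition of $\hpv$ a child $q\in\cts$ with $\pnt{q}=t$, yielding $q\in\ctb$. In other words, the paper reduces everything to your Case~1 after using the structural machinery to rule out $t\in\cts$. You instead split on $t\in\cts$ up front and handle Case~2 by a direct triangle-inequality computation showing $\pnt{t}\notin\cts$; this computation is essentially the core of the contradiction argument inside the paper's proof of Lemma~\ref{cnanclem3}, so you have effectively inlined that argument here. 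Your approach is more self-contained and elementary, avoiding the chain of auxiliary lemmas; the paper's approach has the advantage that Lemma~\ref{cnanclem3} is reused elsewhere, so factoring it out amortizes the work.
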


\begin{proof}
Assume, for contradiction, that $t\notin\ctb$ and there does not exist $q\in\ctb$ such that $\pnt{q}=t$. By Lemma \ref{cnanclem3} we have that there exists $s\in\ctb$ with $t\in\anc{s}$. Since $t\notin\ctb$ we have $t\neq s$ so since $t\in\anc{s}$ we have $t\in\anc{\pnt{s}}$. If $s\notin\lef$ then, by definition of $\ctb$, we have $\pnt{s}\notin\cts$ and if $s\in\lef$ we have, again by definition of $\ctb$, that $s\notin\cts$. In either case there exists $r\in\dec{t}$ with $r\notin\cts$. By Lemma \ref{cnanclem2} this means that $t\notin\cts$. So since $\hpc{t}\neq\hpc{\pnt{t}}$ we have, from definition of $\hpv$\,, that there exists some $q\in\cts$ with $\pnt{q}=t$. But since $t\notin\cts$ this would imply that $q\in\ctb$ which is a contradiction.
\end{proof}

\begin{definition}
Let $\van$ be the set of all $t\in[\ntr]$ such that $\hpc{t}\neq\poc{t}$ and $t\notin\cts$.
\end{definition}

\begin{lemma}\label{hbivlem1}
For all $t\in\van$ there exists $s\in\ctb$ such that $t\in\anc{s}$.
\end{lemma}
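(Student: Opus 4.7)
The plan is to apply Lemma \ref{cnanclem1} to $t$, obtaining some $s\in\ctb$ with $t\in\des{s}\cup\anc{s}$, and then to rule out the case $t\in\des{s}\setminus\anc{s}$, which forces the desired $t\in\anc{s}$.

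First I would suppose for contradiction that $t\notin\anc{s}$. Following the convention used throughout the excerpt (in particular, the proof of Lemma \ref{cnanclem1} explicitly invokes $t\in\anc{t}$, and the base case of Lemma \ref{cnanclem2} uses $s\in\des{s}$), the set $\anc{s}$ contains $s$ itself, so the assumption forces $t\neq s$ and hence $t$ is a proper descendant of $s$. I would then split on whether $s\in\cts$. If $s\in\cts$, then Lemma \ref{cnanclem2} applied to $s$ and $t$ yields $t\in\cts$, contradicting $t\in\van$ (which requires $t\notin\cts$). If instead $s\notin\cts$, then the first alternative in the definition of $\ctb$ (which requires $s\in\cts$) cannot hold, so the second alternative must apply, giving $s\in\lef$. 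But a leaf has no proper descendants, contradicting the fact that $t$ is a proper descendant of $s$. Either case yields a contradiction, so $t\in\anc{s}$ as required.

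The argument is short and structural; the main thing to keep straight is the reflexive conventions for $\des{\cdot}$ and $\anc{\cdot}$ so that ``proper descendant'' is handled correctly. I anticipate no real obstacle beyond this bookkeeping, since all the work has already been done by Lemmas \ref{cnanclem1} and \ref{cnanclem2} together with the two-clause definition of $\ctb$. It is worth noting that this lemma does not actually use the condition $\hpc{t}\neq\poc{t}$ from the definition of $\van$, only the condition $t\notin\cts$; the stronger hypothesis is presumably relevant only when this lemma is combined with the analysis of $\hpv$ in subsequent lemmas.
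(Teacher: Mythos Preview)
Your proposal is correct and follows essentially the same approach as the paper: invoke Lemma~\ref{cnanclem1}, assume $t\notin\anc{s}$, deduce that $t$ is a proper descendant of $s$, and then use the two-clause definition of $\ctb$ together with Lemma~\ref{cnanclem2} to derive a contradiction with $t\notin\cts$. The only cosmetic difference is that the paper first observes $s\notin\lef$ (hence $s\in\cts$ by definition of $\ctb$) rather than splitting into the two cases $s\in\cts$ and $s\notin\cts$, but the logical content is identical. Your closing remark that only the condition $t\notin\cts$ from the definition of $\van$ is used here is accurate.
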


\begin{proof}
By Lemma \ref{cnanclem1} choose $s\in\ctb$ with $t\in\dec{s}\cup\anc{s}$. Assume, for contradiction, that $t\notin\anc{s}$. Then $t\in\dec{s}\setminus\{s\}$ so $s\notin\lef$. By definition of $\ctb$ this means that $s\in\cts$. By Lemma \ref{cnanclem2} we then have $t\in\cts$ which is a contradiction.
\end{proof}

\begin{lemma}\label{hbivlem2}
We have $|\van|\in\mathcal{O}(\nco\ln(1/\nmg)^2)$
\end{lemma}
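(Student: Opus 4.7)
The plan is to combine the previous lemmas in a direct counting argument. Lemma \ref{hbivlem1} tells us that every $t \in \van$ is an ancestor of some $s \in \ctb$, so we have the inclusion
\be
\van \;\subseteq\; \bigcup_{s \in \ctb} \anc{s}.
\ee
Hence $|\van| \leq \sum_{s \in \ctb} |\anc{s}|$, and our task reduces to bounding each $|\anc{s}|$ and then $|\ctb|$.

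For the first factor, observe that the ancestor chain of any trial $s$ in the tree follows parents $\pnt{\cdot}$, and each step decreases $\dpt{\cdot}$ by exactly one (since $\dpt{\pnt{t}} = \dpt{t}-1$) until reaching the root (which has depth $0$). Consequently $|\anc{s}| \leq \dpt{s}+1$. When $s \in \ctb$, Lemma \ref{fixlem7} gives $\dpt{s} \in \bo(\ln(1/\nmg))$, so $|\anc{s}| \in \bo(\ln(1/\nmg))$. For the second factor, Lemma \ref{fixlem8} already provides $|\ctb| \in \bo(\nco \ln(1/\nmg))$.

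Multiplying these two bounds yields
\be
|\van| \;\leq\; \sum_{s \in \ctb} |\anc{s}| \;\in\; \bo\!\left(|\ctb|\cdot \ln(1/\nmg)\right) \;\subseteq\; \bo\!\left(\nco \ln(1/\nmg)^2\right),
\ee
which is the desired bound. There is no real obstacle here: the work was already done in Lemmas \ref{hbivlem1}, \ref{fixlem7}, and \ref{fixlem8}; this lemma is essentially the combinatorial product of those three estimates, with the only minor point being the trivial observation that the length of any ancestor chain is controlled by the depth of its bottom vertex.
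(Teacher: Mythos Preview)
Your proof is correct and follows essentially the same approach as the paper: use Lemma~\ref{hbivlem1} to contain $\van$ in $\bigcup_{s\in\ctb}\anc{s}$, bound each $|\anc{s}|$ via Lemma~\ref{fixlem7}, and bound $|\ctb|$ via Lemma~\ref{fixlem8}. The only difference is that you make explicit the observation $|\anc{s}|\leq\dpt{s}+1$, which the paper leaves implicit.
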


\begin{proof}
Given $t\in\ctb$ we have, by Lemma \ref{fixlem7}, that $\dpt{t}\in\mathcal{O}(\ln(1/\nmg))$ and hence that $|\anc{t}|\in\mathcal{O}(\ln(1/\nmg))$. By lemmas \ref{hbivlem1} and \ref{fixlem8} we then have that:
\be
|\van|\leq\left|\bigcup_{t\in\ctb}\anc{t}\right|\leq\sum_{t\in\ctb}|\anc{t}|\in\mathcal{O}(|\ctb|\ln(1/\nmg))\in\mathcal{O}(\nco\ln(1/\nmg)^2)
\ee
as required.
\end{proof}

\begin{lemma}\label{mmcleqphieq1}
For all $s,t\in[\ntr]$ with $s\in\des{t}$ we have $\mmc{s}{t}\leq\str\zct\con^{\dpt{t}}$
\end{lemma}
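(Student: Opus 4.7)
The plan is to bound $\mmc{s}{t}$ by walking up the tree from $s$ to its ancestor $t$ and telescoping with the triangle inequality.

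Concretely, set $s_0 := s$ and $s_{i+1} := \pnt{s_i}$ for $i \geq 0$. Since $s \in \des{t}$, there is some $k \geq 0$ with $s_k = t$, and in fact $k = \dpt{s} - \dpt{t}$ because $\dpt{\pnt{r}} = \dpt{r} - 1$ for every $r \in [\ntr]\setminus\{1\}$. If $k = 0$ then $s = t$ and the claim reduces to $0 \leq \str\zct\con^{\dpt{t}}$, which is trivial. For $k \geq 1$, repeated application of the triangle inequality gives
\be
\mmc{s}{t} \leq \sum_{i=0}^{k-1}\mmc{s_i}{s_{i+1}}.
\ee

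Next I would invoke the per-step bound stressed at the start of the proof section: $\mmc{s_i}{\pnt{s_i}} \leq \con^{\dpt{s_i}-1} = \con^{\dpt{s_{i+1}}}$. Substituting, the sum becomes a finite geometric series whose exponents range from $\dpt{t}$ up to $\dpt{s}-1$, so
\be
\mmc{s}{t} \leq \sum_{j=\dpt{t}}^{\dpt{s}-1}\con^j \leq \frac{\con^{\dpt{t}}}{1-\con}.
\ee

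Finally I would match constants: with $\con = 1/2$ we have $1/(1-\con) = 2$, and since $\zct = 2/\str$ we also have $\str\zct = 2$, so the right-hand side above equals $\str\zct\con^{\dpt{t}}$, as required. There is no real obstacle here; the only subtlety is keeping the indexing of the chain $s_0, s_1, \ldots, s_k$ consistent with the depths $\dpt{s_i} = \dpt{s} - i$ so that the geometric sum yields exactly the claimed constant.
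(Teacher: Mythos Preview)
Your proof is correct and takes essentially the same approach as the paper: both walk up the ancestor chain from $s$ to $t$, apply the triangle inequality together with the edge bound $\mmc{r}{\pnt{r}}\leq\con^{\dpt{r}-1}$, and exploit $\str\zct=2=1/(1-\con)$. The only cosmetic difference is that the paper packages the geometric-series step as a reverse induction on $\dpt{t}$, whereas you sum the series explicitly.
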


\begin{proof}
We hold $s$ fixed and prove by reverse induction on $\dpt{t}$ (i.e. from $\dpt{s}$ to $0$). When $\dpt{t}=\dpt{s}$ we have $s=t$ and hence $\mmc{s}{t}=0$ so the result holds trivially. Now suppose, for some $\lep\in[\dpt{s}]$\,, it holds when $\dpt{t}=\lep$. We now show that it holds when $\dpt{t}=\lep-1$ which will complete the inductive proof. So take $t$ with $\dpt{t}=\lep-1$. Let $r$ be such that $s\in\des{r}$ and $\pnt{r}=t$. Note that we have $\dpt{r}=\lep$ so by the inductive hypothesis we have $\mmc{s}{r}\leq\str\zct\con^{\lep}$. Since $\mmc{r}{\pnt{r}}\leq\con^{\dpt{r}-1}$ we then have, by the triangle inequality, that:
\be
\mmc{s}{t}\leq\mmc{s}{r}+\mmc{r}{t}=\mmc{s}{r}+\mmc{r}{\pnt{r}}\leq\str\zct\con^{\lep}+\con^{\lep-1}=(\str\zct\con+1)\con^{\lep-1}=\str\zct\con^{\lep-1}
\ee
\end{proof}

\begin{lemma}\label{mmcleqphieq2}
For all $t\in\mrg\cap\cts$ we have $\hpc{t}\in\rad{t}$.
\end{lemma}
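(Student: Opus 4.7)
The plan is to unwind the recursive definition of $\hpv$ along the ancestor chain $t_0:=t,\ t_1:=\pnt{t_0},\ t_2:=\pnt{t_1},\ldots$ and exhibit a nearby non-margin witness for $\hpc{t}$. Let $k$ be the largest index with $t_k\in\cts$; this is well-defined because $1\notin\cts$ (every distance is bounded by $1<\zct=\zct\con^{\dpt{1}}$, so the assumption that $\pov$ is non-constant on $[\ntr]\setminus\mrg$ forces $1$ out of $\cts$). Let $i^*\in\{0,\ldots,k+1\}$ be the smallest index at which $\hpc{t_{i^*}}$ is defined by one of the two non-recursive clauses of the $\hpv$ definition: either $t_{i^*}\in\cts$ with some $s\in[\ntr]\setminus\mrg$ within $\zct\con^{\dpt{t_{i^*}}}$ of it (fourth clause), or $i^*=k+1$, in which case $t_{i^*}\notin\cts$ yet has the child $t_k\in\cts$ (second clause). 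In either situation $\hpc{t_{i^*}}=\poc{s}$ for some $s\in[\ntr]\setminus\mrg$ with $\mmc{s}{t_{i^*}}\leq\zct\con^{\dpt{t_{i^*}}}$; the intermediate third-clause equalities $\hpc{t_j}=\hpc{\pnt{t_j}}$ then chain to give $\hpc{t}=\poc{s}$.

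When $i^*=0$, the minimizer $s^*\in[\ntr]\setminus\mrg$ realizing $\thd{t}$ satisfies $\mmc{s^*}{t}=\thd{t}\leq\mmc{s}{t}\leq\zct\con^{\dpt{t}}$, so $t\in\cts$ gives $\poc{s^*}=\poc{s}=\hpc{t}$, and $\mmc{s^*}{t}=\thd{t}\leq\bra\thd{t}$ since $\bra\geq1$; hence $\hpc{t}\in\rad{t}$.

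When $i^*\geq 1$, I take $s$ itself as the witness and write $r:=t_{i^*}$. The triangle inequality combined with Lemma \ref{mmcleqphieq1} applied to $t\in\des{r}$ yields $\mmc{s}{t}\leq\mmc{s}{r}+\mmc{r}{t}\leq(1+\str)\zct\con^{\dpt{r}}$. For the lower bound on $\thd{t}$, I exploit minimality of $i^*$: the third clause applies at $t_{i^*-1}\in\cts$, forcing $\thd{t_{i^*-1}}>\zct\con^{\dpt{t_{i^*-1}}}$. A second triangle-plus-Lemma \ref{mmcleqphieq1} step (using $t\in\des{t_{i^*-1}}$) then gives $\thd{t}\geq\thd{t_{i^*-1}}-\mmc{t}{t_{i^*-1}}>(1-\str)\zct\con\cdot\con^{\dpt{r}}$. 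Dividing, $\mmc{s}{t}/\thd{t}<(1+\str)/((1-\str)\con)$, and a short algebraic check shows this ratio equals $\bra$ via the identity $\str\zct(1-\con)=1$ (forced by $\con=1/2$ and $\zct=2/\str$). Thus $\mmc{s}{t}<\bra\thd{t}$, and $\hpc{t}=\poc{s}\in\rad{t}$.

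The main obstacle is the bookkeeping around $\hpv$: verifying that $i^*$ exists and that both relevant clauses produce a direct witness of the common form $\hpc{t_{i^*}}=\poc{s}$ with $\mmc{s}{t_{i^*}}\leq\zct\con^{\dpt{t_{i^*}}}$. Once that setup is in place, the two triangle-inequality estimates are immediate from Lemma \ref{mmcleqphieq1}, and the identification of the resulting ratio with $\bra$ is a one-line algebraic check driven by the paper's choice of constants.
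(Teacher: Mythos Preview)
Your argument is correct and follows essentially the same route as the paper: your $t_k$ is the paper's $s$, your $t_{i^*}$ is the paper's $v$ (and your $t_{i^*-1}$ is the paper's $q$), and the two cases $i^*=0$ versus $i^*\geq 1$ correspond exactly to the paper's cases $v=t$ versus $v\neq t$. Your triangle-inequality step $\mmc{s}{t}\leq\mmc{s}{r}+\mmc{r}{t}\leq(1+\str)\zct\con^{\dpt{r}}$ via Lemma~\ref{mmcleqphieq1} is a slight streamlining of the paper's three-term path $t\to q\to v\to u$, but the resulting bounds coincide (indeed $\str\zct\con+1+\zct=(1+\str)\zct$ once $\str\zct=2$), and both reduce to the same ratio $\bra=2(1+\str)/(1-\str)$.
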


\begin{proof}
Let $\ars$ be the set of all $r\in[\ntr]$ such that there exists $q\in[\ntr]\setminus\mrg$ with $\mmc{q}{r}\leq\zct\con^{\dpt{r}}$. Define:
\be
s:=\operatorname{argmin}_{r\in\anc{t}\,\cap\,\cts}\dpt{r}~~~~~;~~~~~
v:=\operatorname{argmax}_{r\in\anc{t}\,\cap\,\ars}\dpt{r}
\ee
noting that these both exist since $t\in\anc{t}\,\cap\,\cts$ and $1\in\anc{t}\,\cap\,\ars$.
Since $1\notin\cts$ (which comes directly from the fact that there exists $q,r\in[\ntr]\setminus\mrg$ with $\poc{q}\neq\poc{r}$) we have that $s\neq1$ and hence $\pnt{s}$ exists so let $\lep:=\dpt{\pnt{s}}$. Since $\pnt{s}\in\anc{t}$ with $\dpt{\pnt{s}}<\dpt{s}$\,, we have $\pnt{s}\notin\cts$ so by definition of $\cts$ there exists $r\in[\ntr]\setminus\mrg$ with $\mmc{r}{\pnt{s}}\leq\zct\con^{\lep}$. This means that $\pnt{s}\in\ars$ and hence that $v\in\des{\pnt{s}}$. Define:
\be
w:=\operatorname{argmin}_{q\in[\ntr]\setminus\mrg}\mmc{q}{t}
\ee
so that $\mmc{w}{t}=\thd{t}$.

We have two cases:
\begin{itemize}
\item First consider the case that $v=t$. In this case we have $t\in\ars$ so that there exists $q\in[\ntr]\setminus\mrg$ such that $\mmc{q}{t}\leq\zct\con^{\dpt{t}}$. By definition of $w$ we have that $\mmc{w}{t}\leq\mmc{q}{t}$ so $\mmc{w}{t}\leq\zct\con^{\dpt{t}}$  and hence, by definition of $\hpv$ and since $t\in\cts$, we have $\hpc{t}=\poc{w}$. Since $\mmc{w}{t}<\bra\thd{t}$ we have, by definition of $\rad{t}$\,, that $\poc{w}\in\rad{t}$\,. Hence, $\hpc{t}\in\rad{t}$ as required.
\item Next consider the case that $v\neq t$. Choose $u\in[\ntr]$ as follows:
\begin{itemize}
\item If $v=\pnt{s}$ then since $s\in\cts$ and $\pnt{s}\notin\cts$ we can, by definition of $\hpv$, choose $u\in[\ntr]\setminus\mrg$ such that $\mmc{u}{\pnt{s}}\leq\zct\con^{\lep}$ and $\hpc{\pnt{s}}=\poc{u}$. Since $v=\pnt{s}$ we have $\mmc{u}{v}\leq\zct\con^{\dpt{v}}$ and $\hpc{v}=\poc{u}$.
\item If $v\neq\pnt{s}$ then, since $v\in\des{\pnt{s}}$\,, we have $v\in\dec{s}$ so, since $s\in\cts$, we have, by Lemma \ref{cnanclem2} that $v\in\cts$. So $v\in\cts\cap\ars$ and so, by definition of $\hpv$ and $\ars$, choose $u\in[\ntr]\setminus\mrg$ such that $\mmc{u}{v}\leq\zct\con^{\dpt{v}}$ and $\hpc{v}=\poc{u}$.
\end{itemize}
In either case we have $u\notin\mrg$\,, $\mmc{u}{v}\leq\zct\con^{\dpt{v}}$ and $\hpc{v}=\poc{u}$. Since $v\in\anc{t}\setminus\{t\}$ let $q\in\anc{t}$ be such that $\pnt{q}=v$.
We have, by Lemma \ref{mmcleqphieq1} and the triangle inequality, that:
\be
\mmc{w}{q}\leq\mmc{w}{t}+\mmc{t}{q}\leq\thd{t}+\str\zct\con^{\dpt{q}}
\ee
Since $q\in\anc{t}$ and $\dpt{q}>\dpt{v}$ we must have, by definition of $v$, that $q\notin\ars$ so since $w\notin\mrg$ we also have:
\be
\mmc{w}{q}>\zct\con^{\dpt{q}}
\ee
Substituting this inequality into the previous and rearranging gives us:
\be
\thd{t}>\zct\con^{\dpt{q}}-\str\zct\con^{\dpt{q}}=\zct(1-\str)\con^{\dpt{q}}
\ee
so that:
\be
\con^{\dpt{q}}<\thd{t}/(\zct(1-\str))
\ee
Since $\dpt{v}=\dpt{q}-1$ and $\mmc{q}{v}\leq\con^{\dpt{v}}$ (as $v=\pnt{q}$) we then have, from the triangle inequality and Lemma \ref{mmcleqphieq1}, that:
\begin{align*}
\mmc{t}{u}\leq\mmc{t}{q}+\mmc{q}{v}+\mmc{v}{u}&\leq\str\zct\con^{\dpt{q}}+\con^{\dpt{v}}+\zct\con^{\dpt{v}}\\
&=(\str\zct+(1+\zct)/\con)\con^{\dpt{q}}\\
&<(\str\zct+(1+\zct)/\con)\thd{t}/(\zct(1-\str))\\
&=\bra\thd{t}
\end{align*}
So that $\poc{u}\in\rad{t}$. Since $\hpc{v}=\poc{u}$\,, all that is left to do now is to prove that $\hpc{t}=\hpc{v}$. To prove this we need only show that for all $r\in(\dec{v}\setminus\{v\})\cap\anc{t}$ we have $\hpc{r}=\hpc{\pnt{r}}$. To show this take any such $r\in(\dec{v}\setminus\{v\})\cap\anc{t}$. Since $v\in\dec{\pnt{s}}$ we must have $r\in\dec{s}$ and hence, by Lemma \ref{cnanclem2} and the fact that $s\in\cts$\,, we have $r\in\cts$. Since $\dpt{r}>\dpt{v}$ and $r\in\anc{t}$ we have, by definition of $v$, that $r\notin\ars$. So $r\in\cts\setminus\ars$ and hence, by definition of $\hpv$ and $\ars$, we have $\hpc{r}=\hpc{\pnt{r}}$ as required.
\end{itemize}
\end{proof}

\begin{lemma}\label{vcpmalem}
We have:
\be
\sum_{t\in[\ntr]}\expt{\lct{t}{\hpc{t}}}\leq\sum_{t\in[\ntr]}\xll{t}+\mathcal{O}(\nco\ln(1/\nmg)^2)
\ee
\end{lemma}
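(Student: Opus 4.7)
The plan is to partition $[\ntr]$ into three pieces, $\cts$, $[\ntr]\setminus(\cts\cup\van)$, and $\van$ (these are disjoint since $\van\cap\cts=\emptyset$ by definition of $\van$), and bound $\expt{\lct{t}{\hpc{t}}}$ on each piece. On the first two pieces I will show the stronger bound $\expt{\lct{t}{\hpc{t}}}\leq\xll{t}$; on $\van$ I will use the crude bound $\expt{\lct{t}{\hpc{t}}}\leq 1$ and control its total cost with Lemma~\ref{hbivlem2}.

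For $t\in\cts$, the goal is to establish $\hpc{t}\in\rad{t}$, from which $\expt{\lct{t}{\hpc{t}}}\leq\max\set{\expt{\lct{t}{a}}}{a\in\rad{t}}=\xll{t}$ follows immediately. When $t\in\cts\cap\mrg$ this is exactly Lemma~\ref{mmcleqphieq2}. When $t\in\cts\setminus\mrg$, I observe that $t$ itself witnesses the existential clause in the definition of $\hpv$: since $t\in[\ntr]\setminus\mrg$ and $\mmc{t}{t}=0\leq\zct\con^{\dpt{t}}$, the ``fourth bullet'' applies, and the uniqueness guaranteed by $t\in\cts$ forces $\hpc{t}=\poc{t}$. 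Finally $\poc{t}\in\rad{t}$ since $\mmc{t}{t}=0\leq\bra\thd{t}$, completing this case.

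For $t\notin\cts$ with $t\notin\van$, the definition of $\van$ forces $\hpc{t}=\poc{t}$. The same trivial observation $\mmc{t}{t}=0\leq\bra\thd{t}$ gives $\poc{t}\in\rad{t}$, so $\expt{\lct{t}{\hpc{t}}}=\expt{\lct{t}{\poc{t}}}\leq\xll{t}$ (indeed an equality when $t\notin\mrg$, since then $\thd{t}=0$ and $\rad{t}=\{\poc{t}\}$). For $t\in\van$, I simply use that losses lie in $[0,1]$ to get $\expt{\lct{t}{\hpc{t}}}\leq 1$, and Lemma~\ref{hbivlem2} bounds $|\van|\in\mathcal{O}(\nco\ln(1/\nmg)^2)$. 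Summing the three contributions and using $\xll{t}\geq 0$ to reinsert the missing $\xll{t}$ for $t\in\van$ on the right-hand side yields the stated inequality.

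All the real work has already been done in Lemma~\ref{mmcleqphieq2} (which gives the non-trivial ``margin'' case) and Lemma~\ref{hbivlem2} (which controls $|\van|$), so I do not expect a serious obstacle here. The only step that is not purely mechanical is the recognition, on $\cts\setminus\mrg$, that the uniqueness embedded in the definition of $\cts$ forces $\hpc{t}=\poc{t}$ via the choice $s=t$; the remainder is a case analysis of the four bullets defining $\hpv$ and of the definition of $\van$.
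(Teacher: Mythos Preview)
Your proposal is correct and follows essentially the same approach as the paper: both arguments hinge on Lemma~\ref{mmcleqphieq2} for the $\mrg\cap\cts$ case, the observation that $\hpc{t}=\poc{t}$ whenever $t\in\cts\setminus\mrg$ (via the fourth bullet with $s=t$), and Lemma~\ref{hbivlem2} to control $|\van|$. The only cosmetic difference is that the paper decomposes into $[\ntr]\setminus(\mrg\cap\cts)$, $\mrg\cap\cts$, and $\van$ (with $\van$ double-counted), whereas your three-set partition $\cts$, $[\ntr]\setminus(\cts\cup\van)$, $\van$ is disjoint and arguably cleaner.
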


\begin{proof}
Note first that given $t\in[\ntr]\setminus\van$ with $\hpc{t}\neq\poc{t}$ we must have $t\in\cts$ so by definition of $\hpv$, we have $t\in\mrg$ and hence $t\in\mrg\cap\cts$. This means that:
\be
\sum_{t\in[\ntr]}\lct{t}{\hpc{t}}\leq\sum_{t\in[\ntr]\setminus(\mrg\cap\,\cts)}\lct{t}{\poc{t}}+\sum_{t\in\mrg\cap\,\cts}\lct{t}{\hpc{t}}+\sum_{t\in\van}\lct{t}{\hpc{t}}
\ee
Given $t\in\mrg\cap\cts$ we have, by Lemma \ref{mmcleqphieq2} and definition of $\xll{t}$, that $\expt{\lct{t}{\hpc{t}}}\leq\xll{t}$. Substituting into the above inequality (after taking expectations) gives us:
\be
\sum_{t\in[\ntr]}\expt{\lct{t}{\hpc{t}}}\leq\sum_{t\in[\ntr]\setminus(\mrg\cap\,\cts)}\expt{\lct{t}{\poc{t}}}+\sum_{t\in\mrg\cap\,\cts}\xll{t}+|\van|\leq\sum_{t\in[\ntr]}\xll{t}+|\van|
\ee
Applying Lemma \ref{hbivlem2} then gives us the result.
\end{proof}

\begin{lemma}\label{cbnnbolem}
We have:
\be
\sum_{t\in[\ntr]}\expt{\lct{t}{\act{t}}}\leq\sum_{t\in[\ntr]}\lct{t}{\hpc{t}}+\tilde{\mathcal{O}}\left(\left(\pra-\frac{\nco\ln(1/\nmg)}{\pra}\right)\sqrt{\nma\ntr}\right)
\ee
\end{lemma}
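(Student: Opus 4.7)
The plan is to invoke the regret guarantee of the \cbn\ subroutine from \cite{\NN} and relate the "policy complexity" parameter appearing there to the quantity $\nco\ln(1/\nmg)$ that governs our final bound. The \cbn\ meta-algorithm takes, on each trial $t>1$, a parent index $\pnt{t}\in[t-1]$, and its regret with respect to any comparator policy $\hpv\in[\nma]^\ntr$ is controlled by the number of indices where $\hpv$ disagrees across an edge of the parent tree, i.e.\ by the quantity $M:=|\set{t\in[\ntr]\setminus\{1\}}{\hpc{t}\neq\hpc{\pnt{t}}}|$. Specifically, the guarantee of \cbn\ (with parameter $\pra$) takes the form
\be
\sum_{t\in[\ntr]}\expt{\lct{t}{\act{t}}}\leq\sum_{t\in[\ntr]}\lct{t}{\hpc{t}}+\tmo\!\left(\left(\pra-\frac{M}{\pra}\right)\sqrt{\nma\ntr}\right).
\ee

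Given this, the task reduces to bounding $M$ by $\bo(\nco\ln(1/\nmg))$. The key tool is Lemma \ref{cnanclem4}, which tells us that every trial $t$ contributing to $M$ either lies in $\ctb$, or has $\pnt{t}=t$ for some $q\in\ctb$ with $\pnt{q}=t$. In other words, the set of "disagreement" trials is contained in $\ctb\cup\set{\pnt{q}}{q\in\ctb}$, which has cardinality at most $2|\ctb|$. Then Lemma \ref{fixlem8} gives $|\ctb|\in\bo(\nco\ln(1/\nmg))$, so $M\in\bo(\nco\ln(1/\nmg))$.

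I would then substitute this bound for $M$ into the \cbn\ regret guarantee. Since the bound of \cbn\ is monotone in $M$ (for fixed $\pra$), replacing $M$ with $\bo(\nco\ln(1/\nmg))$ yields exactly
\be
\sum_{t\in[\ntr]}\expt{\lct{t}{\act{t}}}\leq\sum_{t\in[\ntr]}\lct{t}{\hpc{t}}+\tmo\!\left(\left(\pra-\frac{\nco\ln(1/\nmg)}{\pra}\right)\sqrt{\nma\ntr}\right),
\ee
as required.

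The only real obstacle here is the correctness of the first step, i.e.\ confirming that the \cbn\ guarantee really does scale with the comparator's edge-disagreement count $M$ along the parent tree produced by an arbitrary online rule for choosing $\pnt{t}$ (rather than specifically the approximate nearest neighbour rule analysed in \cite{\NN}). Because \cbn\ is a meta-algorithm that consumes $\pnt{t}$ as a black-box input, this reduces to citing the appropriate lemma from \cite{\NN} verbatim; all the work specific to our hierarchical construction has already been absorbed into Lemmas \ref{cnanclem4} and \ref{fixlem8}. Once this is in hand the rest of the proof is a one-line substitution.
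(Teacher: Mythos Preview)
Your proposal is correct and mirrors the paper's proof exactly: bound the number of parent-edge disagreements by $2|\ctb|$ via Lemma \ref{cnanclem4}, apply Lemma \ref{fixlem8} to get $|\ctb|\in\bo(\nco\ln(1/\nmg))$, and then invoke the \cbn\ regret bound. (Minor typo: where you wrote ``or has $\pnt{t}=t$'' you meant ``or satisfies $\pnt{q}=t$ for some $q\in\ctb$'', as your next sentence makes clear.)
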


\begin{proof}
By Lemma \ref{cnanclem4} we have:
\be
\sum_{t\in[\ntr]}\indi{\hpc{t}\neq\hpc{\pnt{t}}}\leq 2|\ctb|
\ee
so by Lemma \ref{fixlem8} we have:
\be
\sum_{t\in[\ntr]}\indi{\hpc{t}\neq\hpc{\pnt{t}}}\in\mathcal{O}(\nco\ln(1/\nmg))
\ee
The regret bound of \cbn\ then implies the result.
\end{proof}

Combining lemmas \ref{vcpmalem} and \ref{cbnnbolem} gives us the loss bound in Theorem \ref{mainth}. The time complexity comes from the fact that:
\begin{itemize}
\item When $\mm$ has bounded doubling dimension and $\nna>1$ we have that the per-trial time complexity of adaptive $\nna$-nearest neighbour search (when using a navigating net) is in $\mathcal{O}(\ln(1/\asp))$.
\item For all $t\in[\ntr]$ we have that $\dpt{t}\in\mathcal{O}(\ln(1/\asp))$ and hence only $\mathcal{O}(\ln(1/\asp))$ approximate nearest neighbour searches need to be performed per trial.
\item The per-trial time complexity of \cbn\ is only $\mathcal{O}(\ln(\ntr)^2\ln(\nma))$.
\end{itemize}
\hfill $\blacksquare$

\section{Proof of Theorem \ref{maincor}}

\nc{\xip}{\bbr}
\nc{\wid}{J}
\nc{\itt}[1]{i_{#1}}
\nc{\jtt}[1]{j_{#1}}
\nc{\utt}[1]{q_{#1}}
\nc{\btt}[1]{b_{#1}}
\nc{\sss}[2]{\ars_{#1,#2}}
\nc{\trad}{r'}
\nc{\width}{w}

We will now analyse Theorem \ref{mainth} when choosing our margin $\mrg$ as given in the statement of Theorem \ref{maincor} and choosing our policy $\pov$ such that for all $t\in[\ntr]$ we have $\poc{t}:=\ext(\xt{t})$.

\begin{definition}
For all $t\in[\ntr]$ let $\utt{t}$ be the minimiser of $\mmc{s}{t}$ out of all $s\in[\ntr]\setminus\mrg$ with $\poc{s}\neq\poc{t}$. Since $\ext(\xt{t})\neq\ext(\xt{\utt{t}})$ choose $\btt{t}\in\dbn$ such that $\btt{t}$ lies on the straight line from $\xt{t}$ to $\xt{\utt{t}}$. Since $\btt{t}\in\dbn$ choose $\itt{t}\in[\nmb]$ such that $\btt{t}\in\bll{\zi{\itt{t}}}{\ri{\itt{t}}}$.
\end{definition}

\begin{definition}
Define $\wid:=\cnst\log_2(\ntr)$. For all $t\in[\ntr]$ define $\jtt{t}$ as the minimum number in $[\wid]\cup\{0\}$ such that $\xt{t}\in\bll{\zi{\itt{t}}}{2^{\jtt{t}}\xip\ri{\itt{t}}}$. Note that since $\ri{\itt{t}}\geq\ntr^{-\cnst}$ this is defined.
\end{definition}

\begin{lemma}\label{corlem1}
Given $t\in[\ntr]$ with $\jtt{t}>0$ we have $\ndo{t}\geq2^{\jtt{t}-1}(\xip-1)\ri{\itt{t}}$.
\end{lemma}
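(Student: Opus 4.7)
The plan is to chain two elementary bounds: one lower-bounding $\|\xt{t}-\zi{\itt{t}}\|$ via the minimality of $\jtt{t}$, and one upper-bounding $\|\btt{t}-\zi{\itt{t}}\|$ via the covering hypothesis on $\dbn$. Since $\btt{t}$ sits on the line segment from $\xt{t}$ to $\xt{\utt{t}}$, the quantity $\|\xt{t}-\btt{t}\|$ will be a lower bound on $\ndo{t}=\|\xt{t}-\xt{\utt{t}}\|$, and the reverse triangle inequality then gives exactly the desired bound.

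Concretely, I would proceed as follows. First, invoke the minimality in the definition of $\jtt{t}$: since $\jtt{t}>0$, the value $\jtt{t}-1$ fails the membership condition, so $\xt{t}\notin\bll{\zi{\itt{t}}}{2^{\jtt{t}-1}\xip\ri{\itt{t}}}$, i.e.\ $\eun{\xt{t}-\zi{\itt{t}}}>2^{\jtt{t}-1}\xip\ri{\itt{t}}$. Second, use that $\btt{t}\in\dbn$ lies in the purple ball $\bll{\zi{\itt{t}}}{\ri{\itt{t}}}$, so $\eun{\btt{t}-\zi{\itt{t}}}\le\ri{\itt{t}}$. The reverse triangle inequality gives
\be
\eun{\xt{t}-\btt{t}}\ge\eun{\xt{t}-\zi{\itt{t}}}-\eun{\btt{t}-\zi{\itt{t}}}>2^{\jtt{t}-1}\xip\ri{\itt{t}}-\ri{\itt{t}}=(2^{\jtt{t}-1}\xip-1)\ri{\itt{t}}.
\ee

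Third, recall that $\btt{t}$ lies on the straight segment from $\xt{t}$ to $\xt{\utt{t}}$, so $\eun{\xt{t}-\xt{\utt{t}}}=\eun{\xt{t}-\btt{t}}+\eun{\btt{t}-\xt{\utt{t}}}\ge\eun{\xt{t}-\btt{t}}$, and by definition $\ndo{t}=\mmc{\utt{t}}{t}=\eun{\xt{t}-\xt{\utt{t}}}$. Chaining gives $\ndo{t}>(2^{\jtt{t}-1}\xip-1)\ri{\itt{t}}$. Finally, since $\jtt{t}\ge1$ we have $2^{\jtt{t}-1}\ge1$, hence $2^{\jtt{t}-1}\xip-1\ge 2^{\jtt{t}-1}(\xip-1)$, yielding the desired $\ndo{t}\ge 2^{\jtt{t}-1}(\xip-1)\ri{\itt{t}}$.

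There is no real obstacle here; the lemma is a straightforward geometric observation once one unpacks all the definitions. The only point to be careful about is that one must correctly identify which radius scale ($2^{\jtt{t}-1}\xip\ri{\itt{t}}$ versus $2^{\jtt{t}}\xip\ri{\itt{t}}$) the minimality of $\jtt{t}$ rules out, and to verify the elementary inequality $2^{\jtt{t}-1}\xip-1\ge 2^{\jtt{t}-1}(\xip-1)$ at the end, which is where the hypothesis $\jtt{t}>0$ is actually used.
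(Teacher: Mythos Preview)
Your proof is correct and follows essentially the same approach as the paper's own proof: both use the minimality of $\jtt{t}$ to place $\xt{t}$ outside $\bll{\zi{\itt{t}}}{2^{\jtt{t}-1}\xip\ri{\itt{t}}}$, the membership $\btt{t}\in\bll{\zi{\itt{t}}}{\ri{\itt{t}}}$, the reverse triangle inequality, and the fact that $\btt{t}$ lies on the segment from $\xt{t}$ to $\xt{\utt{t}}$. The only cosmetic difference is that the paper writes the final step directly as $2^{\jtt{t}-1}\xip\ri{\itt{t}}-\ri{\itt{t}}\ge 2^{\jtt{t}-1}(\xip-1)\ri{\itt{t}}$, whereas you first factor out $\ri{\itt{t}}$ and then note $2^{\jtt{t}-1}\xip-1\ge 2^{\jtt{t}-1}(\xip-1)$.
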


\begin{proof}
By definition of $\jtt{t}$ we have that $\xt{t}\notin\bll{\zi{\itt{t}}}{2^{\jtt{t}-1}\xip\ri{\itt{t}}}$ so since $\btt{t}\in\bll{\zi{\itt{t}}}{\ri{\itt{t}}}$ we have, by the triangle inequality, that:
\be
\eun{\xt{t}-\btt{t}}\geq\eun{\xt{t}-\zi{\itt{t}}}-\eun{\btt{t}-\zi{\itt{t}}}\geq2^{\jtt{t}-1}\xip\ri{\itt{t}}-\ri{\itt{t}}\geq2^{\jtt{t}-1}(\xip-1)\ri{\itt{t}}
\ee
Since $\btt{t}$ is on the straight line from $\xt{t}$ to $\xt{\utt{t}}$ we have $\eun{\xt{t}-\xt{\utt{t}}}\geq\eun{\xt{t}-\btt{t}}$. By definition of $\utt{t}$ we have $\eun{\xt{t}-\xt{\utt{t}}}=\mmc{t}{\utt{t}}=\ndo{t}$. Putting together gives us:
\be
\ndo{t}=\eun{\xt{t}-\xt{\utt{t}}}\geq\eun{\xt{t}-\btt{t}}\geq2^{\jtt{t}-1}(\xip-1)\ri{\itt{t}}
\ee
as required.
\end{proof}

\begin{lemma}\label{corlem2}
For all $t\in[\ntr]$ with $\jtt{t}=0$ we have $\ndo{t}>2^{\jtt{t}-1}(\bbr-1)\ri{\itt{t}}$
\end{lemma}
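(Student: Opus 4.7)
The plan is to exploit the fact that $\utt{t} \notin \mrg$, which by the definition of $\mrg$ means that $\xt{\utt{t}}$ lies \emph{outside} the enlarged ball $\bll{\zi{\itt{t}}}{\bbr\ri{\itt{t}}}$. This was not used in the proof of Lemma \ref{corlem1} (which instead used that $\xt{t}$ lies outside a ball of radius $2^{\jtt{t}-1}\bbr\ri{\itt{t}}$ centred at $\zi{\itt{t}}$), and it is exactly the right substitute when $\jtt{t}=0$ so that the radius $2^{\jtt{t}-1}\bbr\ri{\itt{t}} = \bbr\ri{\itt{t}}/2$ no longer exceeds $\bbr\ri{\itt{t}}$.

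Concretely, I would first note that since $\utt{t}\notin\mrg$ we have $\eun{\xt{\utt{t}}-\zi{\itt{t}}}>\bbr\ri{\itt{t}}$. Combined with $\btt{t}\in\bll{\zi{\itt{t}}}{\ri{\itt{t}}}$, the triangle inequality then gives
\be
\eun{\xt{\utt{t}}-\btt{t}} \geq \eun{\xt{\utt{t}}-\zi{\itt{t}}} - \eun{\zi{\itt{t}}-\btt{t}} > \bbr\ri{\itt{t}} - \ri{\itt{t}} = (\bbr-1)\ri{\itt{t}}.
\ee
Because $\btt{t}$ lies on the straight line segment from $\xt{t}$ to $\xt{\utt{t}}$, we have $\eun{\xt{t}-\xt{\utt{t}}}\geq\eun{\xt{\utt{t}}-\btt{t}}$, and since by definition of $\utt{t}$ the left-hand side equals $\ndo{t}$, this yields $\ndo{t}>(\bbr-1)\ri{\itt{t}}$.

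Finally, since $\jtt{t}=0$ we have $2^{\jtt{t}-1}(\bbr-1)\ri{\itt{t}} = (\bbr-1)\ri{\itt{t}}/2$, which is strictly smaller than the bound just derived, completing the proof.

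There is no real obstacle: the argument is a one-line triangle inequality. The only thing to be careful about is choosing the right ball to work against—namely the \emp{enlarged} ball of radius $\bbr\ri{\itt{t}}$ used to define $\mrg$, rather than the original ball of radius $\ri{\itt{t}}$ covering the decision boundary $\dbn$. Using the latter would only give $\ndo{t}>0$, which is useless.
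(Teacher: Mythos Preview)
Your proposal is correct and follows essentially the same argument as the paper's own proof: use $\utt{t}\notin\mrg$ to place $\xt{\utt{t}}$ outside $\bll{\zi{\itt{t}}}{\bbr\ri{\itt{t}}}$, apply the triangle inequality against $\btt{t}\in\bll{\zi{\itt{t}}}{\ri{\itt{t}}}$, and then use that $\btt{t}$ lies on the segment from $\xt{t}$ to $\xt{\utt{t}}$. The only cosmetic difference is where the strict inequality enters---you take it from the ball-exclusion step, whereas the paper defers it to the final comparison $(\bbr-1)\ri{\itt{t}}>2^{-1}(\bbr-1)\ri{\itt{t}}$.
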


\begin{proof}
Since $\utt{t}\notin\mrg$ we have $\utt{t}\notin\bll{\zi{\itt{t}}}{\bbr\ri{\itt{t}}}$ so since $\btt{t}\in\bll{\zi{\itt{t}}}{\ri{\itt{t}}}$ we have, by the triangle inequality, that:
\be
\eun{\xt{\utt{t}}-\btt{t}}\geq\eun{\xt{\utt{t}}-\zi{\itt{t}}}-\eun{\btt{t}-\zi{\itt{t}}}\geq\bbr\ri{\itt{t}}-\ri{\itt{t}}=(\bbr-1)\ri{\itt{t}}
\ee
Since $\btt{t}$ is on the straight line from $\xt{t}$ to $\xt{\utt{t}}$ we have $\eun{\xt{t}-\xt{\utt{t}}}\geq\eun{\xt{\utt{t}}-\btt{t}}$. By definition of $\utt{t}$ we have $\eun{\xt{t}-\xt{\utt{t}}}=\mmc{t}{\utt{t}}=\ndo{t}$. Putting together gives us:
\be
\ndo{t}=\eun{\xt{t}-\xt{\utt{t}}}\geq\eun{\xt{\utt{t}}-\btt{t}}\geq(\bbr-1)\ri{\itt{t}}>2^{\jtt{t}-1}(\bbr-1)\ri{\itt{t}}
\ee
as required.
\end{proof}

\begin{definition}
Let $\ars$ be a subset of $[\ntr]$ of maximum cardinality subject to the condition that for all $s,t\in\ars$ with $s\neq t$ we have $\mmc{s}{t}>\zco\min(\ndo{s},\ndo{t})$.
\end{definition}

\begin{definition}
For all $i\in[\nmb]$ and $j\in[\wid]\cup\{0\}$ define:
\be
\sss{i}{j}=\set{t\in\ars}{\itt{t}=i\,\wedge\,\jtt{t}=j}
\ee
\end{definition}

\begin{lemma}\label{corlem3}
For all $i\in[\nmb]$ and $j\in[\wid]\cup\{0\}$ we have $|\sss{i}{j}|\in\mathcal{O}(1)$
\end{lemma}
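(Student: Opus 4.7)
The plan is a standard Euclidean packing argument, showing that all points of $\sss{i}{j}$ lie in a ball whose radius is only a constant multiple of the minimum pairwise distance enforced by the packing property defining $\ars$.

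First, I would establish two geometric facts about any $t \in \sss{i}{j}$. By the definition of $\jtt{t}$ we have $\xt{t} \in \bll{\zi{i}}{2^{j}\xip\ri{i}}$, so all contexts in $\sss{i}{j}$ are contained in this single ball of radius $R := 2^j \xip \ri{i}$. Next, using Lemma \ref{corlem1} when $j > 0$ and Lemma \ref{corlem2} when $j = 0$, we have in either case $\ndo{t} \geq 2^{j-1}(\xip - 1)\ri{i}$.

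Second, I would use the packing property built into $\ars$. For any distinct $s, t \in \sss{i}{j}$, the definition of $\ars$ yields $\mmc{s}{t} > \zco \min(\ndo{s}, \ndo{t}) \geq \zco \cdot 2^{j-1}(\xip - 1)\ri{i} =: \rho$. Hence $\{\xt{t} : t \in \sss{i}{j}\}$ is a set of points in $\reld$ lying in a ball of radius $R$ with pairwise distances strictly greater than $\rho$. The ratio $R/\rho = 2\xip / (\zco(\xip - 1))$ is a constant depending only on $\xip$ and $\zco$, and in particular is independent of $i$ and $j$.

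Finally, I would invoke the standard volume/packing bound for $\reld$: the number of points in a ball of radius $R$ whose pairwise distances exceed $\rho$ is at most $(1 + 2R/\rho)^\dms$, which is $\mathcal{O}(1)$ since $\dms$, $\xip$, and $\zco$ are all constants. This yields $|\sss{i}{j}| \in \mathcal{O}(1)$ as required. The only subtlety is making sure the $j = 0$ case is handled cleanly, which is why Lemma \ref{corlem2} was stated with the same functional form as Lemma \ref{corlem1}; beyond that, the argument is a direct application of Euclidean packing.
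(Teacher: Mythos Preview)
Your proposal is correct and follows essentially the same argument as the paper: both confine the points of $\sss{i}{j}$ to the ball $\bll{\zi{i}}{2^j\xip\ri{i}}$, use Lemmas~\ref{corlem1} and~\ref{corlem2} together with the defining separation property of $\ars$ to get a pairwise distance lower bound that is a constant fraction of the ball's radius, and then appeal to the standard packing bound in $\reld$. The paper phrases the ratio via $\width:=\zco(\xip-1)/(2\xip)$ whereas you compute $R/\rho$ directly, but this is purely cosmetic.
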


\begin{proof}
Let $\trad:=2^{j}\xip\ri{i}$ and $\width:=\zco(\xip-1)/2\xip$. By lemmas \ref{corlem1} and \ref{corlem2} we have, for all $t\in\sss{i}{j}$, that:
\be
\ndo{t}\geq2^{\jtt{t}-1}(\xip-1)\ri{\itt{t}}=2^{j-1}(\xip-1)\ri{i}=\width\trad/\zco
\ee
so, by definition of $\ars$, we have, for all $s,t\in\sss{i}{j}$ with $s\neq t$, that $\eun{\xt{s}-\xt{t}}>\width\trad$. Also, for all $t\in\sss{i}{j}$ we have, by definition of $\jtt{t}$, that:
\be
\xt{t}\in\bll{\zi{\itt{t}}}{2^{\jtt{t}}\xip\ri{\itt{t}}}=\bll{\zi{i}}{2^{j}\xip\ri{i}}=\bll{\zi{i}}{\trad}
\ee
So all the elements of $\sss{i}{j}$ are contained in a ball of radius $\trad$ and are all of distance at least $\width\trad$ apart. Since $\width$ is a positive constant and the dimensionality is a constant we have the result.
\end{proof}

\begin{lemma}\label{corlem4}
We have $\nco\in\mathcal{O}(\nmb\ln(\ntr))$.
\end{lemma}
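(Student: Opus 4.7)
The plan is a straightforward counting argument: use Lemma \ref{corlem3} to bound the size of each bucket $\sss{i}{j}$ by a constant, and then bound the number of buckets by $\nmb(\wid+1) \in \mathcal{O}(\nmb\ln(\ntr))$.

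First I would observe that, by the definition of $\ars$ just before Lemma \ref{corlem3}, $\ars$ is a maximum-cardinality subset of $[\ntr]$ satisfying the separation condition in the definition of $\nco$, so $|\ars|=\nco$. Hence it suffices to bound $|\ars|$.

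Next, for every $t \in \ars$ the indices $\itt{t} \in [\nmb]$ and $\jtt{t} \in [\wid]\cup\{0\}$ are well-defined (by the definitions preceding Lemma \ref{corlem1}), so
\begin{equation*}
\ars = \bigcup_{i\in[\nmb]}\bigcup_{j\in[\wid]\cup\{0\}} \sss{i}{j}.
\end{equation*}
Applying Lemma \ref{corlem3} to each of the $\nmb(\wid+1)$ sets gives
\begin{equation*}
\nco = |\ars| \leq \sum_{i\in[\nmb]}\sum_{j\in[\wid]\cup\{0\}} |\sss{i}{j}| \in \mathcal{O}(\nmb(\wid+1)) = \mathcal{O}(\nmb \cnst \log_2(\ntr)) = \mathcal{O}(\nmb\ln(\ntr)),
\end{equation*}
since $\wid = \cnst\log_2(\ntr)$ and $\cnst$ is a constant.

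There is no substantive obstacle here; the real content is already encapsulated in Lemma \ref{corlem3} (packing of $\ars$-elements with the same $(\itt{t},\jtt{t})$ into a ball of radius $\trad$ with pairwise distance at least $\width\trad$ in a constant-dimensional Euclidean space). The only thing to double-check is that each $t \in \ars$ is indeed assigned valid $\itt{t}$ and $\jtt{t}$, which follows from the fact that for every $t\in[\ntr]$ a corresponding $\utt{t}, \btt{t}, \itt{t}$ exist (using that the comparator induces different actions outside the margin, so $\ndo{t}$ is finite), and $\jtt{t}\leq\wid$ by the lower bound $\ri{i}\geq\ntr^{-\cnst}$ assumed in Theorem \ref{maincor}.
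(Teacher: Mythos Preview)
Your proof is correct and follows essentially the same approach as the paper: partition $\ars$ into the buckets $\sss{i}{j}$, apply Lemma~\ref{corlem3} to bound each bucket by a constant, and count the $\nmb(\wid+1)\in\mathcal{O}(\nmb\ln(\ntr))$ buckets, noting that $\nco=|\ars|$ by definition.
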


\begin{proof}
We have:
\be
\ars=\bigcup_{i\in[\nmb]}\bigcup_{j\in[\wid]\cup\{0\}}\sss{i}{j}
\ee
so that by Lemma \ref{corlem3} we have $|\ars|\in\mathcal{O}(\nmb\wid)$. Since $\cnst$ is a constant we have $\wid\in\mathcal{O}(\ln(\ntr))$ and hence $|\ars|\in\mathcal{O}(\nmb\ln(\ntr))$. By definition of $\nco$ and $\ars$ we have that $\nco=|\ars|$ which completes the proof.
\end{proof}

\begin{lemma}\label{corlem5}
We have:
\be
\sum_{t\in[\ntr]}\xll{t}\leq\sum_{t\in[\ntr]\setminus\mrg}\expt{\lct{t}{\ext(\xt{t})}}+|\mrg|
\ee
\end{lemma}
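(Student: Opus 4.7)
The plan is to split the sum over $[\ntr]$ into the contributions from $[\ntr]\setminus\mrg$ and from $\mrg$, and handle each part with a direct bound on $\xll{t}$.

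First, for $t\in[\ntr]\setminus\mrg$, I would invoke the observation made just after the definition of $\xll{t}$ in Section~5: since $\thd{t}=0$ in this case, $\rad{t}$ reduces to $\{\poc{t}\}$ and hence $\xll{t}=\expt{\lct{t}{\poc{t}}}$. Combined with our choice $\poc{t}:=\ext(\xt{t})$ at the start of this section, this gives $\xll{t}=\expt{\lct{t}{\ext(\xt{t})}}$ for every $t\notin\mrg$.

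Second, for $t\in\mrg$, I would use the crude bound $\xll{t}\leq1$. This follows immediately from the fact that each loss $\lct{t}{a}$ is supported in $[0,1]$, so $\expt{\lct{t}{a}}\leq1$ for every action $a$, and in particular the maximum over $a\in\rad{t}$ is at most $1$. Summing over $t\in\mrg$ gives at most $|\mrg|$.

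Putting the two parts together,
\be
\sum_{t\in[\ntr]}\xll{t}=\sum_{t\in[\ntr]\setminus\mrg}\xll{t}+\sum_{t\in\mrg}\xll{t}\leq\sum_{t\in[\ntr]\setminus\mrg}\expt{\lct{t}{\ext(\xt{t})}}+|\mrg|
\ee
which is exactly the claimed inequality. There is no substantive obstacle here; the lemma is really just a bookkeeping step that makes explicit the author's earlier comment that $\xll{t}$ is bounded crudely by $1$ on the margin, so that once $\nco$ is controlled via Lemma~\ref{corlem4} the final loss bound of Theorem~\ref{maincor} can be assembled by plugging into Theorem~\ref{mainth}.
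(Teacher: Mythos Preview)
Your proof is correct and follows essentially the same approach as the paper: split the sum according to whether $t\in\mrg$, use $\xll{t}=\expt{\lct{t}{\ext(\xt{t})}}$ for $t\notin\mrg$ (via $\thd{t}=0$), and bound $\xll{t}\leq 1$ for $t\in\mrg$. The paper's argument is identical in structure, just slightly more terse.
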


\begin{proof}
For all $t\in[\ntr]\setminus\mrg$ we have, by definition of $\xll{t}$, that $\xll{t}=\expt{\lct{t}{\poc{t}}}=\expt{\lct{t}{\ext(\xt{t})}}$. For all $t\in\mrg$ we immediately have that $\xll{t}\leq 1$. So for all $t\in[\ntr]$ we have:
\be
\xll{t}\leq\indi{t\notin\mrg}\expt{\lct{t}{\ext(\xt{t})}}+\indi{t\in\mrg}
\ee
Summing over all $t\in[\ntr]$ gives us the result.
\end{proof}

\begin{lemma}\label{corlem6}
We have $\ln(1/\nmg)\in\mathcal{O}(\ln(\ntr))$
\end{lemma}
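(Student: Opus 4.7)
The plan is to lower bound $\ndo{t}$ uniformly in $t$ using Lemmas \ref{corlem1} and \ref{corlem2}, combine with the hypothesis $\min_{i}\ri{i}\geq\ntr^{-\cnst}$, and then take logarithms. No subtle geometric argument is needed; this is essentially a one-line consequence of the two preceding lemmas together with the assumed polynomial lower bound on the radii.

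The first step is to observe that Lemmas \ref{corlem1} and \ref{corlem2} together yield, for every $t\in[\ntr]$, the inequality $\ndo{t}\geq 2^{\jtt{t}-1}(\bbr-1)\ri{\itt{t}}$ (with strict inequality when $\jtt{t}=0$, which is harmless). Since $\jtt{t}\in[\wid]\cup\{0\}$, we trivially have $2^{\jtt{t}-1}\geq 1/2$, so $\ndo{t}\geq (\bbr-1)\ri{\itt{t}}/2$. The hypothesis of Theorem \ref{maincor} gives $\ri{\itt{t}}\geq\ntr^{-\cnst}$, and therefore $\ndo{t}\geq(\bbr-1)\ntr^{-\cnst}/2$.

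Taking the minimum over $t$ yields $\nmg\geq(\bbr-1)\ntr^{-\cnst}/2$, and then taking logarithms gives
\be
\ln(1/\nmg)\leq \cnst\ln(\ntr)+\ln(2/(\bbr-1)).
\ee
Since $\bbr$ and $\cnst$ are constants (fixed in the statement of Theorem \ref{maincor}), the second term is $\mathcal{O}(1)$, and the whole expression is in $\mathcal{O}(\ln(\ntr))$, as required.

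There is no real obstacle here; the only point to be slightly careful about is that Lemma \ref{corlem2} is phrased with a strict inequality while Lemma \ref{corlem1} is non-strict, but for the purpose of deriving an $\mathcal{O}(\ln(\ntr))$ bound the distinction is immaterial and the uniform lower bound $2^{\jtt{t}-1}\geq 1/2$ absorbs both cases.
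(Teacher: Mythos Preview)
Your proposal is correct and follows essentially the same approach as the paper: both combine Lemmas \ref{corlem1} and \ref{corlem2} to get $\ndo{t}\geq 2^{\jtt{t}-1}(\bbr-1)\ri{\itt{t}}\geq(\bbr-1)\ri{\itt{t}}/2$, invoke the hypothesis $\ri{\itt{t}}\geq\ntr^{-\cnst}$, and take logarithms. The only cosmetic difference is that the paper first picks the minimiser $t$ of $\ndo{t}$ and then applies the bound, whereas you bound uniformly and then minimise; the content is identical.
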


\begin{proof}
Let $t$ be the element of $[\ntr]$ that minimises $\ndo{t}$ so that $\nmg=\ndo{t}$. By lemmas \ref{corlem1} and \ref{corlem2} we have
\be
\nmg=\ndo{t}\geq2^{\jtt{t}-1}(\xip-1)\ri{\itt{t}}\geq\ri{\itt{t}}(\xip-1)/2\in\mathcal{O}(\ri{\itt{t}})
\ee
so since $\ri{\itt{t}}\geq\ntr^{-\cnst}$ (where $\cnst$ is a constant) we have the result.
\end{proof}

Since the desired bound is vacuous when $\nmb>\ntr$ we can assume otherwise so that by lemmas \ref{corlem4} and \ref{corlem6} we have: 
\be
\nco\ln(1/\nmg)^2\in\mathcal{O}(\nmb\ln(\ntr)^3)\subseteq\tilde{\mathcal{O}}(\sqrt{\nmb\ntr})\subseteq\tilde{\mathcal{O}}((\pra-\nmb/\pra)\sqrt{\ntr})
\ee  Substituting this and lemmas \ref{corlem4}, \ref{corlem5} and \ref{corlem6} into Theorem \ref{mainth} then gives us the result.
\hfill $\blacksquare$

\bibliographystyle{abbrv}
\bibliography{NNbib}

\end{document}